\documentclass[11pt]{article}

\usepackage[preprint]{acl}

\usepackage{times}
\usepackage{latexsym}
\usepackage[T1]{fontenc}
\usepackage[utf8]{inputenc}
\usepackage{microtype}
\usepackage{inconsolata}
\usepackage{graphicx}
\usepackage{booktabs}
\usepackage{amsmath,amssymb,amsfonts,amsthm}
\usepackage{mathtools}
\usepackage{bm}
\usepackage{bbm}
\usepackage{multirow}
\usepackage{xcolor}
\usepackage{colortbl}
\usepackage{array}
\usepackage{tabularx}
\usepackage{enumitem}
\usepackage[ruled,vlined]{algorithm2e}
\SetKwInput{KwInput}{Input}
\SetKwInput{KwOutput}{Output}

\newtheorem{proposition}{Proposition}

\definecolor{tabhead}{RGB}{36,52,71}
\definecolor{simred}{RGB}{165,30,45}
\newcommand{\Th}[1]{\textcolor{white}{\textbf{#1}}}
\newcommand{\ourname}[1]{\textcolor{tabhead}{\textbf{#1}}}
\newcommand{\tabend}{\arrayrulecolor{tabhead}\hline\arrayrulecolor{black}}

\newcommand{\method}{\textsc{Squelch}}
\newcommand{\E}{\mathbb{E}}
\newcommand{\R}{\mathbb{R}}
\newcommand{\soft}{\mathcal{S}}
\newcommand{\hA}{\hat{A}}
\newcommand{\hs}{\hat\sigma}
\newcommand{\supp}{\mathrm{supp}}
\newcommand{\SURE}{\mathrm{SURE}}
\newcommand{\cu}{c_{\mathrm{U}}}

\newcommand{\oursCsL}{87.04}

\newcommand{\bestNameCsL}{FlyLoRA}
\newcommand{\gainBestCsL}{0.93}
\newcommand{\gainLoraCsL}{2.64}
\newcommand{\oursCsQ}{89.28}

\newcommand{\gainBestCsQ}{0.77}
\newcommand{\gainLoraCsQ}{2.18}

\newcommand{\bestNameDomL}{FlyLoRA}
\newcommand{\gainBestDomL}{1.19}

\newcommand{\gainBestDomQ}{0.75}

\newcommand{\gainLoraMin}{2.2}
\newcommand{\gainLoraMax}{3.2}
\newcommand{\gainBestMin}{0.8}
\newcommand{\gainBestMax}{1.2}
\newcommand{\rankOurs}{8.6}

\newcommand{\gapOurs}{0.09}
\newcommand{\gapNoEA}{0.27}
\newcommand{\rankNoEA}{6.9}
\newcommand{\dropNoEA}{0.61}
\newcommand{\rankZeroIn}{8.7}
\newcommand{\rankZeroOOD}{21.4}

\newcommand{\mtDropOurs}{0.47}
\newcommand{\mtDropLora}{1.50}
\newcommand{\mtDropFly}{0.85}
\newcommand{\forgetOurs}{1.13}
\newcommand{\forgetLora}{3.87}
\newcommand{\forgetFly}{2.20}
\newcommand{\glueOurs}{89.71}
\newcommand{\glueLora}{88.41}
\newcommand{\glueBest}{89.21}
\newcommand{\trainOverhead}{5}
\newcommand{\inferOverhead}{5}

\title{How Many Ranks Does a Token Need?\\A Noise-Floor Squelch for Rank-One LoRA Experts}

\author{
  \textbf{Mei Okonkwo\textsuperscript{1}},
  \textbf{Owen Castellanos\textsuperscript{1}},
  \textbf{Ingrid Solberg\textsuperscript{2}},
  \textbf{Yusuf Demirel\textsuperscript{1}}
  \\
  \\
  \textsuperscript{1}University of Edinburgh \\
  \textsuperscript{2}Delft University of Technology
}

\begin{document}
\maketitle

\begin{abstract}
Low-rank adaptation (LoRA) applies every rank-one component of its update to every token, and mixture-of-experts variants replace this with a learned router that activates a fixed number of components. We argue that the number of components a token needs is a detection question with a classical answer. A token's responses to the rank-one components of a trained LoRA consist of a few large values on a Gaussian noise floor whose width matches that of an untrained adapter, and the floor carries cross-talk rather than adaptation. We introduce \method{}, a router-free mixture of rank-one experts that (i) soft-thresholds each token's responses at a multiple of its own robust noise level, so the active rank varies per token; (ii) chooses that multiple for each module by Stein's unbiased risk estimate, so no rank budget is specified; and (iii) replaces the resulting marginal keep-or-drop decisions by one explaining-away step on the value Gram matrix, which suppresses redundant co-active components and recruits complementary ones. \method{} adds no trainable parameters to LoRA. We prove that the squelch is within a logarithmic factor of the oracle input-dependent allocation, that the selected threshold is risk-consistent, and that explaining away is a descent step toward the joint sparse code whose error is controlled by the coherence of co-active components. On commonsense reasoning, knowledge, math, code, and GLUE with LLaMA-3.1-8B, Qwen3, and DeBERTaV3, \method{} improves over LoRA by \gainLoraMin--\gainLoraMax{} points and over the strongest mixture baseline by \gainBestMin--\gainBestMax{} points while activating \rankOurs{} of 32 components per token on average.
\end{abstract}

\section{Introduction}
\label{sec:intro}

\begin{figure*}[t]
\centering
\includegraphics[width=\textwidth]{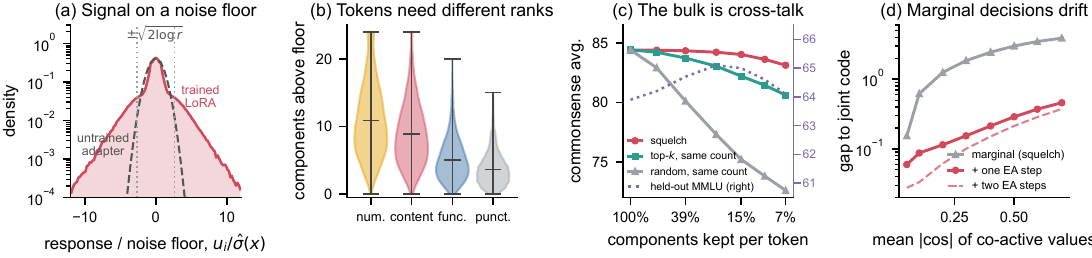}
\caption{\textbf{A token's rank-one responses are a sparse signal on a noise floor} (LLaMA-3.1-8B, LoRA $r{=}32$ trained on commonsense reasoning). (a) Standardized responses of the trained adapter (red) against those of an untrained adapter (dashed): a Gaussian bulk plus heavy tails beyond the universal threshold. (b) The number of above-floor components per token varies with the token type. (c) Zeroing sub-floor responses after training preserves accuracy and improves held-out MMLU (right axis), whereas keeping the same number of components by top-$k$ or at random does not. (d) Gap between the marginal (thresholded) code and the joint sparse code as co-active value vectors become coherent (random dictionaries, exact computation); one explaining-away step closes most of it.}
\label{fig:motivation}
\end{figure*}

Low-rank adaptation (LoRA) fine-tunes a frozen model by adding a product $BA$ of two thin matrices to each linear layer~\citep{hu2022lora,dettmers2023qlora}. Written as a sum $\sum_i b_ia_i^{\!\top}$ of $r$ rank-one components, the update applies all $r$ components to every token. Two lines of work have questioned this uniformity. Rank-allocation methods give different modules different ranks, but a module's rank is still shared by all of its inputs~\citep{zhang2023adalora,ding2023sora,he2025gora,liu2026flexlora}. Mixtures of LoRA experts route tokens among low-rank experts~\citep{li2024mixlora,tian2024hydralora,zhuang2025ldmole}, and recent designs push the expert down to a single rank-one component~\citep{zhang2024trex,zhao2025smora,zou2026flylora,lu2026moram}. How many components fire is still fixed by hand or learned from the task loss: through a top-$k$, a probability mass, a learned threshold network, or a complexity predictor~\citep{zhao2025smora,huang2024harder,liu2024adamole,li2026flexilora}.

We ask the question directly: \emph{how many rank-one components does a token need?} Figure~\ref{fig:motivation} suggests an answer. If we collect a token's responses $u_i=\hat a_i^{\!\top}x$ to the normalized keys of a trained LoRA, a few of them are large and the rest form a Gaussian bulk whose width matches the response of an \emph{untrained} adapter, i.e.\ of random projections (panel a). The number of responses above this floor differs widely across tokens (panel b). The bulk also does little useful work: zeroing it after training preserves in-domain accuracy and even improves held-out retention, whereas removing the same number of components at random, or by a fixed top-$k$, degrades accuracy (panel c). The bulk is cross-talk.

This turns rank allocation into a detection problem, the sparse normal-means problem of signal processing~\citep{donoho1994ideal}, and the problem comes with classical answers. One should threshold relative to a robust estimate of the noise level~\citep{donoho1995denoising,hampel1974influence}. The threshold can be chosen without labels by minimizing Stein's unbiased risk estimate~\citep{stein1981estimation,donoho1995adapting}. When the atoms overlap, marginal thresholding is no longer the right estimator and should give way to the joint sparse code, which iterative shrinkage approximates~\citep{tibshirani1996lasso,daubechies2004iterative}. Panel~(d) shows why the last point matters. As the value vectors of co-active components become coherent, the marginal code drifts far from the joint code, and a single shrinkage step removes most of the drift.

We build \method{} on this view. \method{} keeps LoRA's parameters and changes only how the rank-one components are combined for each token. It (i) \emph{squelches} every response that does not clear a multiple $c_\ell$ of the token's own noise floor, estimated by the median absolute response; (ii) sets $c_\ell$ for each module by minimizing Stein's unbiased risk estimate over the tokens of a batch; and (iii) applies one \emph{explaining-away} step that uses the Gram matrix of the value vectors to turn marginal keep-or-drop decisions into a joint one. The per-token rank is then an outcome rather than a hyperparameter: \method{} uses \rankOurs{} of 32 components per token on average, more on numerals and less on function words, and no component is active for \rankZeroIn\% of token--module pairs. Our contributions are:
\begin{itemize}[leftmargin=*,itemsep=1pt,topsep=2pt]
\item \textbf{A statistical view of per-token rank.} Rank-one responses behave like a sparse signal on a noise floor, so choosing how many components a token uses is sparse detection (\S\ref{sec:intro}, \S\ref{sec:method}).
\item \textbf{A router-free, budget-free method.} \method{} combines noise-floor thresholding, SURE-selected thresholds, and one explaining-away step, and adds no trainable parameters to LoRA (\S\ref{sec:method}).
\item \textbf{Guarantees.} The squelch is within a logarithmic factor of the oracle allocation, the SURE threshold is risk-consistent, and explaining away is a descent step toward the joint code whose error is governed by co-active coherence (\S\ref{sec:theory}).
\item \textbf{Evidence.} On five task families and seven backbones, \method{} outperforms ten LoRA and mixture baselines, with ablations and a controlled study that isolate each step (\S\ref{sec:experiments}, App.~\ref{app:synthetic}).
\end{itemize}

\section{Related Work}
\label{sec:related}

\begin{figure*}[t]
\centering
\includegraphics[width=\textwidth]{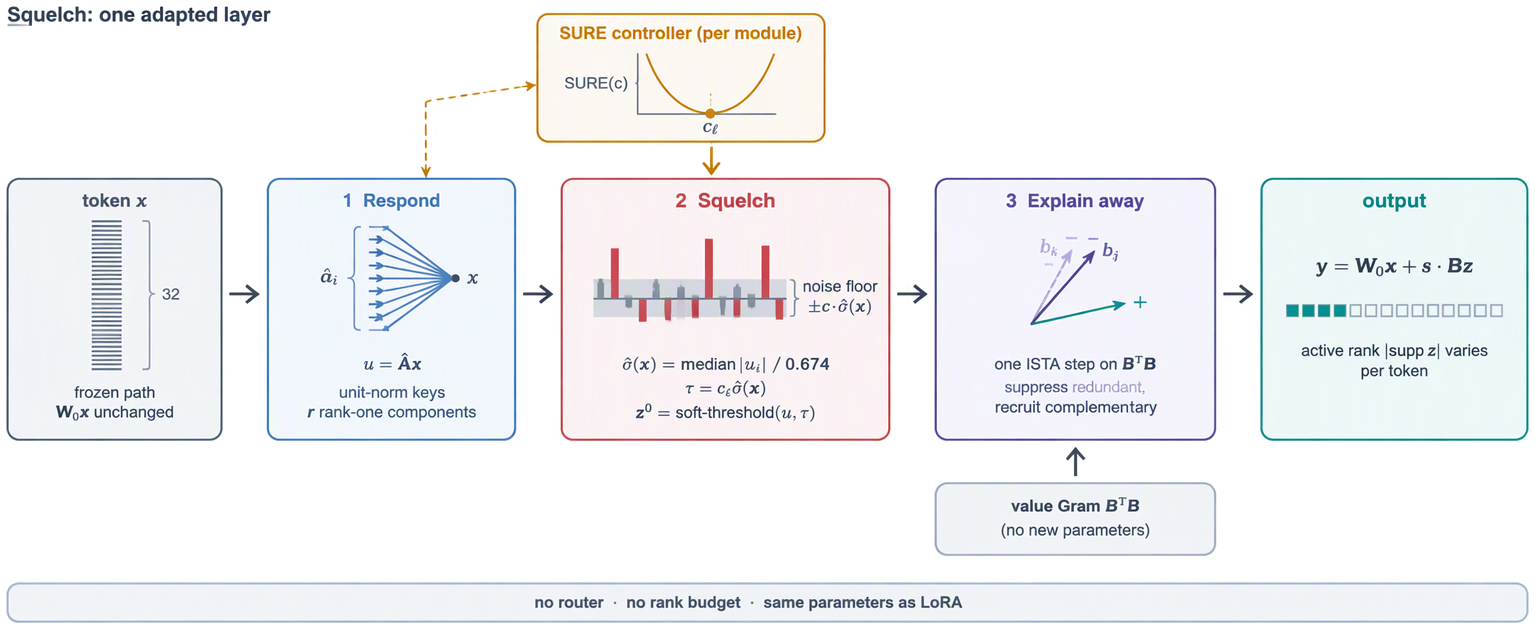}
\caption{\textbf{\method{} in one adapted layer.} The token's responses to $r$ unit-norm keys are squelched at a multiple $c_\ell$ of the token's own noise floor, where $c_\ell$ is chosen per module by Stein's unbiased risk estimate. One explaining-away step on the value Gram $B^{\!\top}\!B$ turns the marginal code into a joint one. The surviving components form the token's update, so its rank varies per token. \method{} uses LoRA's parameters $A,B$ and adds none.}
\label{fig:overview}
\end{figure*}

\paragraph{Rank allocation.}
AdaLoRA prunes singular values by an importance score~\citep{zhang2023adalora}, SoRA gates ranks with a proximal $\ell_1$ step~\citep{ding2023sora}, and later methods allocate ranks by gradients, beam search, learned elasticity, entropy, or variational relevance~\citep{he2025gora,gu2025beamlora,chang2025elalora,liu2026flexlora,doyle2025lrvd}. These methods change \emph{which modules} receive rank; every input of a module shares it. Input-dependent allocation is rarer. RaNA masks ranks and neurons per input for faster inference~\citep{garcia2025rana}, and Flexi-LoRA predicts a rank from input complexity~\citep{li2026flexilora}. Gradient-based importance scores can also mislead~\citep{sawant2026gradient}, which motivates a budget that does not depend on them.

\paragraph{Mixtures of LoRA and rank-one experts.}
MoE-LoRA methods route tokens to low-rank experts~\citep{zadouri2024pushing,dou2024loramoe,li2024mixlora,wu2024mole,tian2024hydralora,zhuang2025ldmole,fan2025goat,mu2026talklora}. Finer experts help: T-REX and MoDE mix rank-one experts~\citep{zhang2024trex,ning2024mode}, SMoRA treats every rank as an expert~\citep{zhao2025smora}, FlyLoRA routes by the top-$k$ responses of a frozen random down-projection~\citep{zou2026flylora}, and MoRAM lets rank-one memories self-activate through their keys~\citep{lu2026moram}. Dynamic MoE routing varies the number of experts through top-$p$ rules, null experts, ReLU gates, or confidence~\citep{huang2024harder,zeng2024adamoe,wang2025remoe,jin2025dtopp,saliencro2026care}. \method{} shares self-routing with FlyLoRA and MoRAM but derives the number of active components from a noise model, the threshold from a risk estimate, and the combination rule from the joint sparse code.

\paragraph{Shrinkage and sparse codes in neural networks.}
Soft thresholding and iterative shrinkage are classical tools for sparse estimation~\citep{donoho1995denoising,daubechies2004iterative,beck2009fista}, and unrolled or competitive variants learn sparse codes quickly~\citep{rozell2008sparse,chen2018lista,monga2021unrolling}. In language models, thresholded activations yield activation sparsity~\citep{liu2025teal,lee2024cats,zhang2025rsparse} and sparse autoencoders~\citep{cunningham2024sparse,gao2025scaling,rajamanoharan2024jumprelu}. These methods fix a sparsity level or learn a threshold; TEAL, for instance, sets one magnitude cutoff per tensor to meet a target sparsity~\citep{liu2025teal}. \method{} instead derives the threshold from a noise model and applies it to the components of an adapter, so the cutoff follows each token's own noise floor. Extended related work is in Appendix~\ref{app:related}.

\section{\method{}}
\label{sec:method}

\subsection{Rank-one components as a code}
\label{sec:method-code}

A LoRA layer computes $y=W_0x+sBAx$ with $A\in\R^{r\times d}$, $B\in\R^{d_{\mathrm{out}}\times r}$ and scale $s$~\citep{hu2022lora}. Writing $a_i^{\!\top}$ for the rows of $A$ and $b_i$ for the columns of $B$,
\begin{equation}
sBAx=s\textstyle\sum_{i=1}^{r}b_i\,(a_i^{\!\top}x)=sBu,
\label{eq:code}
\end{equation}
so LoRA forms a \emph{code} $u=Ax$ and synthesizes the update from the value dictionary $B$. The code is dense: every component contributes to every token. We call a component \emph{active} for $x$ if its coefficient is nonzero, and the number of active components is the token's \emph{rank}, since the update restricted to a neighbourhood of $x$ has rank at most that number (Proposition~\ref{prop:lora}). \method{} replaces the dense code $u$ by a sparse code $z(x)$ in three steps (Figure~\ref{fig:overview}, Algorithm~\ref{alg:squelch}). We normalize the keys~\citep{buyukakyuz2024olora}, $\hat a_i=a_i/\|a_i\|$, so that responses $u=\hA x$ are comparable across components; this does not restrict the hypothesis class, since the scale of $a_i$ can move into $b_i$.

\subsection{Detect: the noise floor}
\label{sec:method-floor}

For a component that is irrelevant to $x$, the response $\hat a_i^{\!\top}x$ behaves like a random projection of $x$: approximately Gaussian with a scale common to all such components. The few relevant components stand out above it (Figure~\ref{fig:motivation}a). We estimate the scale of this floor per token with the median absolute response~\citep{hampel1974influence,donoho1995denoising},
\begin{equation}
\hs(x)=\operatorname{med}_i|u_i(x)|\,/\,\Phi^{-1}(\tfrac34),
\label{eq:mad}
\end{equation}
which ignores up to half of the components carrying signal, and soft-threshold the responses:
\begin{equation}
z^0=\soft_{\tau(x)}(u),\qquad \tau(x)=c_\ell\,\hs(x),
\label{eq:squelch}
\end{equation}
where $\soft_\tau(v)=\mathrm{sign}(v)\max(|v|-\tau,0)$ acts elementwise and $c_\ell$ is a per-module multiple. A component survives only if it clears the token's own floor. Tokens with strong, structured responses keep many components, weak tokens keep few, and a token whose responses are all floor keeps none and passes through the frozen layer unchanged. Soft thresholding is continuous in $u$, so gradients reach every surviving component. Because every response belongs to the floor at initialization, the first $5\%$ of steps ramp $c_\ell$ linearly from zero, i.e.\ train as a dense LoRA.

\subsection{Decide: a budget from an unbiased risk}
\label{sec:method-sure}

The multiple $c_\ell$ sets the expected rank of module $\ell$, and fitting it to the task loss drives it toward zero, since denser codes fit the training loss more easily (Table~\ref{tab:ablation}). We instead treat $c_\ell$ as a denoising parameter. For standardized responses $v=u/\hs$, Stein's lemma gives an unbiased estimate of the squared error of $\soft_c(v)$ with respect to the unknown signal,
\begin{equation}
\SURE(c;v)=r-2N_c(v)+\textstyle\sum_i\min(v_i^2,c^2),
\label{eq:sure}
\end{equation}
where $N_c(v)=\#\{i:|v_i|\le c\}$, and it requires no labels~\citep{stein1981estimation,donoho1995adapting}. Every $20$ steps, each module sets $\hat c_\ell=\arg\min_{c\in[0,\cu]}\sum_{x}\SURE(c;v(x))$ over the tokens of the batch and updates $c_\ell$ by an exponential moving average. The cap $\cu=\sqrt{2\log r}$ is the universal threshold, and, as in SureShrink, $\cu$ is used directly when the responses show no signal above the floor~\citep{donoho1995adapting}. Each module's rank budget therefore emerges from its own responses, and no global budget or per-module rank is specified.

\subsection{Explain away: from marginal to joint decisions}
\label{sec:method-ea}

Equation~\eqref{eq:squelch} decides for each component in isolation. When the values of co-active components overlap, these marginal decisions double count shared directions and miss components that are needed only in combination (Figure~\ref{fig:motivation}d). The code that accounts for the overlap solves a weighted LASSO over the value dictionary,
\begin{equation}
\min_z\ \tfrac12\|Bz\|^2-(Du)^{\!\top}z+\tau\textstyle\sum_iD_{ii}|z_i|,
\label{eq:lasso}
\end{equation}
where $G=B^{\!\top}B$ and $D=\mathrm{diag}(G)$. This is the LASSO for a target update whose per-component regression coefficients are $u$. If the values are orthogonal, \eqref{eq:lasso} decouples and its solution is exactly $z^0$. \method{} takes one proximal-gradient step from $z^0$,
\begin{equation}
z=\soft_{\eta\tau}\big(z^0+\eta\,(u-(I+M)z^0)\big),
\label{eq:ea}
\end{equation}
with $M=D^{-1}(G-D)$ and $\eta=1/\lambda_{\max}(D^{-1/2}GD^{-1/2})$. The term $Mz^0$ \emph{explains away} what co-active components already cover: a component whose value is aligned with a stronger co-active one is suppressed, and one whose contribution complements them can be recruited. The step uses only the value Gram, costs $O(r^2)$ per token, and adds no parameters. The output of the layer is $y=W_0x+sBz$.

\begin{algorithm}[t]
\footnotesize
\DontPrintSemicolon
\SetAlgoLined
\caption{\method{} layer (training step)}
\label{alg:squelch}
\KwInput{tokens $X$; $W_0$; $A,B$; multiple $c_\ell$; step $t$}
$u\leftarrow \hA x$ for each $x\in X$ \tcp*{unit-norm keys}
$\hs(x)\leftarrow\operatorname{med}_i|u_i|/0.6745$\;
$\tau(x)\leftarrow\min(1,t/T_w)\,c_\ell\,\hs(x)$ \tcp*{warm-up ramp}
$z^0\leftarrow\soft_{\tau}(u)$ \tcp*{squelch, Eq.~\eqref{eq:squelch}}
$G\leftarrow B^{\!\top}B$;\quad $M\leftarrow D^{-1}(G-D)$\;
$\eta\leftarrow1/\lambda_{\max}(D^{-1/2}GD^{-1/2})$\;
$z\leftarrow\soft_{\eta\tau}\big(z^0+\eta(u-(I{+}M)z^0)\big)$ \tcp*{Eq.~\eqref{eq:ea}}
$y\leftarrow W_0x+sBz$\;
\If{$t\bmod 20=0$}{$\hat c\leftarrow\arg\min_{c\le\cu}\sum_x\SURE(c;u/\hs)$\;
$c_\ell\leftarrow0.9\,c_\ell+0.1\,\hat c$\;}
\end{algorithm}

\paragraph{Cost.}
\method{} has exactly LoRA's parameters. Per token it adds a median over $r$ responses and an $r\times r$ product; per module and step it adds $B^{\!\top}B$ and a power iteration on an $r\times r$ matrix. Inactive components skip the up-projection. Like every mixture of LoRA experts, the update depends on the input and cannot be merged into $W_0$.

\section{Theory}
\label{sec:theory}

The statements below idealize the responses as a sparse signal plus Gaussian noise; proofs are in Appendix~\ref{app:proofs}, and Table~\ref{tab:numerics} checks every statement numerically.

\begin{proposition}[LoRA as a special case; conic structure]
\label{prop:lora}
(i) With $c_\ell=0$ and no explaining away, \method{} equals LoRA. (ii) For $\alpha>0$, $z(\alpha x)=\alpha z(x)$: active sets depend only on $x/\|x\|$, and the update is a continuous, positively homogeneous, piecewise-linear function of $x$. (iii) On each piece, the update has rank at most $|\supp z(x)|$.
\end{proposition}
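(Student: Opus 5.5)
Each part follows by composing the elementary maps of Algorithm~\ref{alg:squelch} and tracking one property through the chain; we treat the three parts in order.

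For (i), set $c_\ell=0$, so $\tau(x)=0$ and $\soft_0$ is the identity, giving $z^0=u=\hA x$. Without explaining away, $z=z^0$, and the update is $sBz=sB\hA x$. Writing $\hA=\mathrm{diag}(\|a_i\|)^{-1}A$, the rescaling $b_i\mapsto\|a_i\|b_i$ noted in \S\ref{sec:method-code} gives $sB\hA x=sBAx$, which is exactly LoRA.

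For (ii), we check that every stage is positively homogeneous of degree one. First, $u(\alpha x)=\alpha u(x)$. Since $\operatorname{med}_i|u_i|$ scales by $\alpha>0$, we get $\hs(\alpha x)=\alpha\hs(x)$ and hence $\tau(\alpha x)=\alpha\tau(x)$, with $c_\ell$ fixed at inference. Soft thresholding obeys $\soft_{\alpha\tau}(\alpha v)=\alpha\soft_\tau(v)$, so $z^0$ is homogeneous. The explaining-away argument $z^0+\eta(u-(I+M)z^0)$ is linear in $(u,z^0)$, because $\eta$ and $M$ depend only on $B$. The final threshold $\eta\tau$ also scales by $\alpha$, so $z(\alpha x)=\alpha z(x)$. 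Because $\alpha>0$ preserves signs and the sign of each $|v_i|-\tau$, the active sets depend only on $x/\|x\|$.

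Continuity holds because the median of $|u_i|$ is continuous in $u$ and soft thresholding is jointly continuous in $(v,\tau)$. For piecewise linearity, partition $\R^d$ by three pieces of data: which index attains the median (an order pattern of the $|u_i|$), the signs of the $u_i$, and the signs of $|u_i|-\tau$ and of the corresponding quantities in the second threshold. Each such condition is a sign condition on a linear form in $x$, because $\hs$ equals $\pm\hat a_{j}^{\!\top}x/\Phi^{-1}(3/4)$ on a region where $j$ is the median index with fixed sign. The cells are therefore polyhedral cones, and on each cone every stage is linear, so $z$ is linear there.

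This is the main obstacle: the median is piecewise linear but only after the order-statistic cells are fixed. For even $r$ it is an average of two order statistics, which is still linear on cells. We will state the partition by the finite family of hyperplanes $\{|u_i|=|u_j|\}$, $\{u_i=0\}$ and the threshold equalities, and argue that their complement's cells cover $\R^d$ up to a measure-zero set. Continuity then extends linearity to the closures of the cells.

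For (iii), on a cell $z(x)=Lx$ for a fixed matrix $L$, and the update is $sBLx$. Coordinates $i\notin\supp z$ are zero throughout the cell, since the active set is constant on it, so the corresponding rows of $L$ vanish. Hence $\operatorname{rank}(sBL)\le\operatorname{rank}(L)\le|\supp z(x)|$.
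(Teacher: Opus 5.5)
Your proposal is correct and follows the paper's proof essentially step for step. Part (i) uses the rescaling $b_i\mapsto\|a_i\|b_i$; part (ii) uses stagewise positive homogeneity of $u$, $\hat\sigma$, $\tau$, soft thresholding, and the linear explaining-away pre-activation; part (iii) partitions the input space into polyhedral cones on which the median index, signs, and supports are fixed, so that $z=Lx$ with the rows of $L$ vanishing off $\mathrm{supp}\,z$.
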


\begin{proposition}[Near-oracle detection]
\label{prop:oracle}
Let $u=\theta+\sigma\varepsilon$ with $\varepsilon\sim\mathcal N(0,I_r)$, and let $\tau_U=\sigma\sqrt{2\log r}$. (a) $\E\|\soft_{\tau_U}(u)-\theta\|^2\le(2\log r+1)\big(\sigma^2+\sum_i\min(\theta_i^2,\sigma^2)\big)$, where the sum is the risk of the oracle that keeps component $i$ iff $|\theta_i|>\sigma$. (b) No component with $\theta_i=0$ is active with probability at least $1-1/\sqrt{\pi\log r}$, and each component with $|\theta_i|\ge2\tau_U$ is active with probability at least $1-1/(r\sqrt{\pi\log r})$.
\end{proposition}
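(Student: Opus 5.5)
Part (a) is the Donoho--Johnstone oracle inequality for soft thresholding. I would reduce it to a one-dimensional bound and sum over coordinates. By rescaling $u/\sigma$, it suffices to take $\sigma=1$ and $\lambda=\sqrt{2\log r}$. The goal is the scalar inequality
\[
r_S(\lambda,\mu):=\E\big(\soft_\lambda(\mu+\varepsilon)-\mu\big)^2\le r_S(\lambda,0)+(1+\lambda^2)\min(\mu^2,1),
\]
where $\varepsilon\sim\mathcal N(0,1)$. Two standard facts drive it:
\begin{itemize}[leftmargin=*,itemsep=1pt,topsep=2pt]
\item $r_S(\lambda,\mu)$ is increasing in $|\mu|$ and bounded by $1+\lambda^2$.
\item $\partial_\mu r_S(\lambda,\mu)\le 2|\mu|$, which follows by differentiating the explicit Gaussian-integral formula for the risk. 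Integrating gives $r_S(\lambda,\mu)\le r_S(\lambda,0)+\mu^2$.
\end{itemize}
Combining the two facts gives the scalar bound.

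Next I would control the zero-signal term. Using $r_S(\lambda,0)=2\E(|\varepsilon|-\lambda)_+^2\le 2\phi(\lambda)/\lambda^3\cdot 2$ (a Mills-ratio bound), at $\lambda=\sqrt{2\log r}$ one gets $r\, r_S(\lambda,0)\le 1/\sqrt{\pi\log r}\cdot(\text{const})/\log r\le 2\log r+1$. Summing over $i$ then yields
\[
\E\|\soft_{\tau_U}(u)-\theta\|^2\le r\,r_S(\lambda,0)+(1+2\log r)\sum_i\min(\theta_i^2,1)\le(2\log r+1)\Big(1+\sum_i\min(\theta_i^2,1)\Big).
\]
Multiplying back by $\sigma^2$ gives (a). The oracle interpretation comes from comparing with keep-or-kill: keeping $u_i$ costs $\sigma^2$ and killing it costs $\theta_i^2$, so the oracle risk is $\min(\theta_i^2,\sigma^2)$.

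For (b), both claims are union or tail bounds on Gaussians.
\begin{itemize}[leftmargin=*,itemsep=1pt,topsep=2pt]
\item \emph{Null components.} A component with $\theta_i=0$ is active iff $|\varepsilon_i|>\lambda$. A union bound over at most $r$ null components gives probability at most $r\cdot 2\bar\Phi(\lambda)\le r\cdot 2\phi(\lambda)/\lambda=r\cdot\frac{2}{\sqrt{2\pi}}e^{-\log r}/\sqrt{2\log r}=1/\sqrt{\pi\log r}$.
\item \emph{Strong components.} If $|\theta_i|\ge2\tau_U$, the component is inactive only if $|\varepsilon_i|\ge\lambda$ in the direction opposing the signal. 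This has probability $\bar\Phi(\lambda)\le\phi(\lambda)/\lambda=1/(2r\sqrt{\pi\log r})$, which is within the claimed $1/(r\sqrt{\pi\log r})$.
\end{itemize}

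The main obstacle is pinning the constant exactly at $2\log r+1$ rather than a looser $C\log r$ in (a). I would follow Donoho--Johnstone's Theorem 1 (1994) closely here. The delicate step is bounding $r\,r_S(\lambda,0)\le 2\log r+1$ uniformly in $r\ge2$. I would first try the crude bound $r_S(\lambda,0)\le 4\phi(\lambda)/\lambda^3$ for large $r$ and check small $r$ directly. If that fails near $r=2$, I would fall back on $r_S(\lambda,0)\le 1+\lambda^2$, which trivially gives $r_S(\lambda,0)\le(2\log r+1)/r\cdot r$ only when summed per coordinate. In that case I would instead bound each coordinate by $(1+\lambda^2)(1/r+\min(\mu_i^2,1))$, which is the form Donoho--Johnstone actually prove and which sums to the stated inequality.
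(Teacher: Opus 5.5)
Your proposal is correct and is essentially the paper's proof. Both use the same scalar reduction $r_S(\lambda,\mu)\le r_S(\lambda,0)+\min(\mu^2,1+\lambda^2)$, obtained from $\partial_\mu r_S\le2|\mu|$ and the cap $1+\lambda^2$, then a Mills-type bound giving $r\,r_S(\lambda,0)\le1+\lambda^2$, and Gaussian tail and union bounds for (b); your one-sided tail for strong components is even a factor of two sharper than required.

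Your worry about small $r$ is unfounded. We have $r_S(\lambda,0)=\E(|\varepsilon|-\lambda)_+^2$ (without your extra factor $2$), which equals $2\phi(\lambda)\int_0^\infty s^2e^{-\lambda s-s^2/2}\,ds\le4\phi(\lambda)/\lambda^3$ for every $\lambda>0$. This is slightly tighter than the paper's $2\phi(\lambda)(2\lambda^{-3}+3\lambda^{-5})$ and gives $r\,r_S(\lambda,0)\le1/(\sqrt{\pi}(\log r)^{3/2})\approx0.98<1+2\log2$ already at $r=2$. So drop the fallback paragraph, which is circular anyway: the per-coordinate form $(1+\lambda^2)(1/r+\min(\mu_i^2,1))$ needs exactly $r_S(\lambda,0)\le(1+\lambda^2)/r$.
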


\begin{proposition}[Budget-free threshold]
\label{prop:sure}
Let $v_n=\theta_n+\varepsilon_n$, $n\le N$, with independent $\varepsilon_n\sim\mathcal N(0,I_r)$. (a) For every $c\ge0$, $\E\sum_n\SURE(c;v_n)=\E\sum_n\|\soft_c(v_n)-\theta_n\|^2$. (b) $\sup_{c\in[0,\cu]}|\sum_n(\SURE(c;v_n)-\|\soft_c(v_n)-\theta_n\|^2)|/(Nr)\to0$ in probability, so the risk of $\hat c=\arg\min\sum_n\SURE(c;v_n)$ exceeds the best risk on $[0,\cu]$ by $o_p(Nr)$.
\end{proposition}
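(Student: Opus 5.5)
For part (a), I will apply Stein's lemma coordinatewise. Fix a token $n$ and a coordinate $i$, and write $v=\theta+\varepsilon$ with $\varepsilon\sim\mathcal N(0,1)$. Set $g(v)=\soft_c(v)-v$, so that $g(v)=-\mathrm{sign}(v)\min(|v|,c)$. This map is Lipschitz, hence weakly differentiable, with $g'(v)=-\mathbbm{1}\{|v|<c\}$ almost everywhere. Expanding the square gives
\begin{equation*}
(\soft_c(v)-\theta)^2=\varepsilon^2+2\varepsilon g(v)+g(v)^2 .
\end{equation*}
Stein's lemma gives $\E[\varepsilon g(v)]=\E[g'(v)]=-\Pr(|v|<c)$. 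Since $\E\varepsilon^2=1$ and $g(v)^2=\min(v^2,c^2)$, the expected error in this coordinate equals $\E[1-2\,\mathbbm{1}\{|v|\le c\}+\min(v^2,c^2)]$. The event $|v|=c$ has probability zero, so using $\le$ in $N_c$ does not matter. Summing over $i\le r$ and $n\le N$ yields the identity in (a) for every $c\ge0$, with no uniformity required.

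For part (b), I will split the difference into two terms. Put $D_N(c)=\sum_n\big(\SURE(c;v_n)-\|\soft_c(v_n)-\theta_n\|^2\big)$. By the computation above, $D_N(c)$ is a sum over the $Nr$ independent pairs $(n,i)$ of
\begin{equation*}
\xi_{ni}(c)=(1-\varepsilon_{ni}^2)-2\big(\mathbbm{1}\{|v_{ni}|\le c\}+\varepsilon_{ni}\,g(v_{ni})\big).
\end{equation*}
The first term does not depend on $c$, and its average tends to $0$ by the weak law of large numbers. Each summand in the second term has mean zero by Stein's lemma. For fixed $c$, its variance is bounded uniformly in $\theta$, because $|g|\le c\le\cu$ and the indicator is bounded. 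Chebyshev's inequality then gives pointwise convergence at rate $O_p((Nr)^{-1/2})$. I treat $r$ as fixed, or at most slowly growing, so that $\cu$ stays bounded.

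The main obstacle is upgrading pointwise convergence to uniform convergence over $[0,\cu]$ when the $\theta_{ni}$ are arbitrary and non-identically distributed. I would use a grid argument. Take $c_0<\dots<c_K$ with mesh $\delta$ on $[0,\cu]$, and apply the pointwise bound together with a union bound over the finitely many grid points. Between grid points, the term $\varepsilon g(v)$ satisfies $|g_c(v)-g_{c'}(v)|\le|c-c'|$, so its oscillation over a mesh cell is at most $\delta\cdot\frac1{Nr}\sum|\varepsilon_{ni}|=O_p(\delta)$. The indicator term is monotone in $c$. I would therefore bracket it between its values at neighbouring grid points and control the expected count of $|v_{ni}|\in(c_k,c_{k+1}]$. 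This expected count is at most $\delta\cdot\varphi(0)Nr$, since the density of each $v_{ni}$ is bounded by $1/\sqrt{2\pi}$. This is the standard Glivenko--Cantelli bracketing step, as in Donoho and Johnstone's SureShrink argument. Letting $\delta\to0$ after $N\to\infty$ gives $\sup_c|D_N(c)|/(Nr)\to0$ in probability.

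The risk conclusion then follows by a standard argument. Write $L(c)=\sum_n\|\soft_c(v_n)-\theta_n\|^2$ and let $c^*$ minimize $L$ on $[0,\cu]$. Since $\hat c$ minimizes SURE,
\begin{equation*}
L(\hat c)-L(c^*)\le\big(L(\hat c)-\SURE(\hat c)\big)+\big(\SURE(c^*)-L(c^*)\big)\le 2\sup_c|D_N(c)|=o_p(Nr).
\end{equation*}
Taking $c^*$ as the minimizer of the expected risk instead requires one extra uniform step. That step is $\sup_c|L(c)-\E L(c)|=o_p(Nr)$, which follows by the same grid argument.
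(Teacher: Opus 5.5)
Your proof is correct. Part (a) and the final step $L(\hat c)-L(c^*)\le 2\sup_c|D_N(c)|$ coincide with the paper's, but part (b) takes a different route.

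\textbf{The paper's route.} It adds an assumption that is not in the statement: the pairs $(\theta_n,v_n)$ are drawn i.i.d.\ with $\E\|\theta_n\|^2<\infty$. It then notes that the per-coordinate difference $h_c$ has mean zero by (a). It calls $\{h_c:c\in[0,\cu]\}$ a VC-type class with an integrable envelope and invokes a uniform law of large numbers.

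\textbf{Your route.} You write the difference explicitly as $(1-\varepsilon^2)-2(\mathbbm{1}\{|v|\le c\}+\varepsilon g_c(v))$. This shows it depends on $\theta$ only through bounded quantities. You then prove uniformity directly with a finite grid, using Lipschitz continuity in $c$ for the $\varepsilon g_c$ term and monotone bracketing for the indicator.

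\textbf{Comparison.} Your argument is more elementary and more general. It covers fixed, arbitrary, even $N$-dependent $\theta_n$, with independent but non-identically distributed coordinates, and needs no moment condition on $\theta$. It therefore proves the proposition under its stated hypotheses, which the paper's argument does not. The paper's route is shorter, but it rests on an off-the-shelf empirical-process theorem and a distributional assumption imported into the proof. Your remark about comparing against the minimizer of the expected risk also goes beyond the paper, which compares only against the minimizer of the realized loss.

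\textbf{Two small points.} First, $|v_{ni}|\in(c_k,c_{k+1}]$ means $v_{ni}$ lies in two intervals of length $\delta$, so the expected count is at most $2\delta\varphi(0)Nr$, not $\delta\varphi(0)Nr$. Second, you need the count itself to be small with high probability, not only its mean. Markov's inequality at the finitely many grid cells gives this. Neither point affects the argument.
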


\begin{proposition}[Explaining away]
\label{prop:ea}
Let $F$ be the objective in \eqref{eq:lasso}. (a) Equation~\eqref{eq:ea} is a proximal-gradient step on $F$ in the metric $D$, so $F(z)\le F(z^0)$, with equality iff $z^0$ minimizes $F$. (b) If two co-active components have identical values and responses $u>\tau$, the first pass outputs $2(u-\tau)b$ while \eqref{eq:ea} outputs $(u-\tau)b$, the LASSO solution. (c) For $i\ne j$, $\partial\mathcal L/\partial G_{ij}=0$ unless one of the two components is active in $z^0$ and the other is active in $z$. (d) With unit-norm values, and when $z^0$, $z$, and the minimizer $z^\star$ share their support $S$ and signs, $\|z^0-z^\star\|\le(|S|-1)\mu_S\|z^\star\|$ and $\|z-z^\star\|\le\|I-\eta G_{SS}\|_2\,\|z^0-z^\star\|$, where $\mu_S$ is the largest coherence $|b_i^{\!\top}b_j|$ within $S$.
\end{proposition}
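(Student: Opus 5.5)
The plan is to write the objective in \eqref{eq:lasso} as $F(z)=f(z)+g(z)$ with smooth part $f(z)=\tfrac12 z^{\!\top}Gz-(Du)^{\!\top}z$ and separable nonsmooth part $g(z)=\tau\sum_iD_{ii}|z_i|$. Each of (a)--(d) then reduces to a short computation with $\nabla f(z)=Gz-Du$ and the optimality conditions of the LASSO.

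For (a), I would first verify that \eqref{eq:ea} is the proximal-gradient map in the metric $D$:
\[
z=\arg\min_w\ \nabla f(z^0)^{\!\top}(w-z^0)+\tfrac{1}{2\eta}\|w-z^0\|_D^2+g(w).
\]
Because $D$ is diagonal and $g$ is $D$-weighted, this minimization separates coordinatewise into soft thresholding at level $\eta\tau$ of $z^0-\eta D^{-1}(Gz^0-Du)=z^0+\eta(u-(I+M)z^0)$, using $D^{-1}G=I+M$.

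Next I would establish the descent inequality. The choice $\eta=1/\lambda_{\max}(D^{-1/2}GD^{-1/2})$ is equivalent to $G\preceq\eta^{-1}D$, so $f(w)\le f(z^0)+\nabla f(z^0)^{\!\top}(w-z^0)+\tfrac1{2\eta}\|w-z^0\|_D^2$. The model being minimized is $\eta^{-1}$-strongly convex in $\|\cdot\|_D$. Comparing its value at $z$ with its value at $z^0$ then gives
\[
F(z)\le F(z^0)-\tfrac1{2\eta}\|z-z^0\|_D^2 .
\]
Equality therefore forces $z=z^0$. This says $z^0$ is a fixed point of the prox-gradient map, which for convex $F$ is equivalent to $0\in\partial F(z^0)$. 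The converse direction is immediate.

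For (b), I would take $b_1=b_2=b$ and, after rescaling, $\|b\|=1$. Then $G=\mathbf 1\mathbf 1^{\!\top}$, $D=I$, $M$ is the swap matrix, and $\eta=1/2$. From $z^0=(u-\tau)\mathbf 1$ the first pass gives $Bz^0=2(u-\tau)b$. The pre-threshold vector is $(u-\tau)+\tfrac12(u-2(u-\tau))=u/2$, so $z=\tfrac{u-\tau}{2}\mathbf 1$ and $Bz=(u-\tau)b$. In the LASSO, $F$ depends on $s=z_1+z_2$ only through $\tfrac12 s^2-us+\tau(|z_1|+|z_2|)$, which is minimized at $s=u-\tau$. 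If $r>2$, I would state the claim with the other components inactive or orthogonal, so that they decouple.

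For (c), I would hold $\eta$ fixed, as the power iteration is not differentiated through; this is the point I expect to need care in stating. Under that convention, $G_{ij}$ with $i\ne j$ enters $z$ only through the pre-threshold coordinates $i$ and $j$, with terms $-\eta G_{ij}z^0_j/D_{ii}$ and $-\eta G_{ij}z^0_i/D_{jj}$. The soft threshold has zero derivative at inactive outputs. Hence
\[
\partial\mathcal L/\partial G_{ij}=-\eta\Big(\tfrac{\partial\mathcal L}{\partial z_i}\mathbb 1[z_i\ne0]\tfrac{z^0_j}{D_{ii}}+\tfrac{\partial\mathcal L}{\partial z_j}\mathbb 1[z_j\ne0]\tfrac{z^0_i}{D_{jj}}\Big),
\]
which vanishes unless one index is active in $z$ and the other in $z^0$.

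For (d), with $D=I$ and common support $S$ and signs $s_S$, I would compare three expressions on $S$: the KKT condition $G_{SS}z^\star_S=u_S-\tau s_S$, the first pass $z^0_S=u_S-\tau s_S$, and the step $z_S=z^0_S+\eta(u_S-\tau s_S-G_{SS}z^0_S)$. These give
\[
z^0_S-z^\star_S=(G_{SS}-I)z^\star_S, \qquad z_S-z^\star_S=(I-\eta G_{SS})(z^0_S-z^\star_S).
\]
Off $S$ all three vectors vanish. Gershgorin bounds $\|G_{SS}-I\|_2\le(|S|-1)\mu_S$, which completes (d).

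The main obstacle is the precise statement of (a), namely getting the metric-$D$ descent lemma and the equality case right, together with making the stop-gradient convention on $\eta$ explicit in (c).
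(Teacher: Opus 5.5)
Your proposal is correct and follows the paper's proof part by part. The one difference is in (a) and is cosmetic: you run the prox-gradient argument directly in the $D$-metric, using the quadratic upper bound from $G\preceq\eta^{-1}D$ and strong convexity of the model, whereas the paper substitutes $w=D^{1/2}z$ and cites the Euclidean sufficient-decrease inequality of Beck and Teboulle.

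Your added caveat for $r>2$ in (b) addresses something the paper leaves implicit, but inactivity of the other components is not enough. Since $\eta$ is computed from the full $\hat G$, you need $\lambda_{\max}(\hat G)=2$, for example with the remaining values mutually orthogonal and orthogonal to $b$. Otherwise $\eta<1/2$, and the pair contributes $2(1-\eta)(u-\tau)b\ne(u-\tau)b$.
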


Proposition~\ref{prop:oracle} says that the squelch does almost as well as an oracle that knows which components a token needs, paying only the $\log r$ factor that no estimator can avoid uniformly~\citep{donoho1994ideal}. Proposition~\ref{prop:sure} replaces the rank budget with a quantity that can be estimated from the responses alone. Proposition~\ref{prop:ea} explains the third step: marginal decisions are exact only when co-active values are incoherent; otherwise their error grows with $(|S|-1)\mu_S$, and one explaining-away step contracts it. Part~(c) makes the step \emph{co-activation-aware}: the task gradient reaches the overlap of two values only through tokens on which both are active.

\begin{table*}[t]
\centering
\footnotesize
\setlength{\tabcolsep}{3.9pt}
\renewcommand{\arraystretch}{0.97}
\begin{tabular}{l c c cccccccc c}
\rowcolor{tabhead}
\Th{Method} & \Th{Params\,(\%)} & \Th{Rank} & \Th{BoolQ} & \Th{PIQA} & \Th{SIQA} & \Th{HellaS.} & \Th{WinoG.} & \Th{ARC-e} & \Th{ARC-c} & \Th{OBQA} & \Th{Avg.} \\
\multicolumn{12}{@{}l}{\textit{LLaMA-3.1-8B}}\\
LoRA & 1.04 & 32 & 72.9 & 88.0 & 80.1 & 94.1 & 85.0 & 90.3 & 79.6 & 85.2 & 84.40 \\
DoRA & 1.05 & 32 & 73.5 & 88.4 & 80.6 & 94.6 & 85.6 & 90.8 & 80.6 & 85.9 & 85.00 \\
AdaLoRA & 1.04 & 32 & 73.2 & 88.3 & 80.5 & 94.3 & 85.2 & 90.6 & 79.8 & 85.7 & 84.70 \\
GoRA & 1.04 & 32 & 73.9 & 88.5 & 80.7 & 94.7 & 86.1 & 90.9 & 80.6 & 86.4 & 85.23 \\
\addlinespace[2pt]
HydraLoRA & 1.05 & 16 & 74.0 & 88.8 & 80.8 & 94.8 & 86.1 & 91.1 & 81.0 & 86.4 & 85.38 \\
MixLoRA & 1.07 & 8 & 74.0 & 89.0 & 81.2 & 94.7 & 86.5 & 91.4 & 81.5 & 87.0 & 85.66 \\
AdaMoLE & 1.07 & 11.6 & 73.8 & 88.9 & 81.0 & 94.6 & 86.5 & 91.4 & 81.2 & 86.7 & 85.51 \\
LD-MoLE & 1.07 & 10.8 & 74.6 & 89.2 & 81.6 & 95.1 & 86.9 & 91.8 & 81.9 & 87.3 & 86.05 \\
SMoRA & 1.53 & 8 & 74.5 & 89.0 & 81.6 & 94.7 & 86.6 & 91.6 & 81.6 & 87.0 & 85.83 \\
FlyLoRA & 0.55 & 8 & 74.6 & 89.4 & 81.8 & 95.1 & 86.8 & 91.7 & 82.2 & 87.3 & 86.11 \\
\addlinespace[2pt]
\ourname{\method{} (ours)} & 1.04 & 8.6 & \textbf{75.3} & \textbf{90.3} & \textbf{82.5} & \textbf{95.2} & \textbf{88.1} & \textbf{92.6} & \textbf{83.4} & \textbf{88.9} & \textbf{87.04} \\
\addlinespace[3pt]
\multicolumn{12}{@{}l}{\textit{Qwen3-8B}}\\
LoRA & 1.07 & 32 & 75.6 & 89.9 & 81.3 & 95.2 & 86.1 & 94.3 & 86.0 & 88.4 & 87.10 \\
DoRA & 1.08 & 32 & 75.9 & 90.1 & 81.7 & 95.3 & 86.6 & 94.6 & 86.6 & 89.0 & 87.48 \\
AdaLoRA & 1.07 & 32 & 75.7 & 90.1 & 81.5 & 95.2 & 86.3 & 94.5 & 86.5 & 88.9 & 87.34 \\
GoRA & 1.07 & 32 & 76.3 & 90.4 & 82.0 & 95.7 & 86.9 & 94.9 & 86.8 & 89.1 & 87.76 \\
\addlinespace[2pt]
HydraLoRA & 1.08 & 16 & 76.2 & 90.5 & 82.2 & 95.8 & 87.1 & 94.9 & 87.0 & 89.3 & 87.88 \\
MixLoRA & 1.10 & 8 & 76.4 & 90.9 & 82.4 & 95.7 & 87.2 & 95.1 & 87.5 & 89.8 & 88.13 \\
AdaMoLE & 1.10 & 11.9 & 76.6 & 90.7 & 82.1 & 95.7 & 87.2 & 95.3 & 87.5 & 89.7 & 88.10 \\
LD-MoLE & 1.10 & 11.2 & 77.1 & 91.1 & 82.7 & 95.8 & 87.4 & 95.4 & 87.7 & 90.0 & 88.40 \\
SMoRA & 1.57 & 8 & 76.9 & 91.0 & 82.6 & \textbf{96.0} & 87.5 & 95.5 & 87.8 & 89.9 & 88.40 \\
FlyLoRA & 0.57 & 8 & 76.9 & 91.1 & 82.4 & 95.8 & 87.8 & 95.6 & 88.2 & 90.3 & 88.51 \\
\addlinespace[2pt]
\ourname{\method{} (ours)} & 1.07 & 8.8 & \textbf{77.8} & \textbf{91.8} & \textbf{83.5} & 95.9 & \textbf{88.5} & \textbf{96.0} & \textbf{89.4} & \textbf{91.3} & \textbf{89.28} \\
\tabend
\end{tabular}
\caption{\textbf{Commonsense reasoning} (accuracy, \%, mean of three seeds). \textbf{Params}: trainable parameters relative to the backbone. \textbf{Rank}: rank-one components that contribute to a token's update, averaged over tokens and modules. Single adapters, then mixtures. \textbf{Bold}: best score. \method{} is in the header color.}
\label{tab:main-cs}
\end{table*}

\begin{table*}[t]
\centering
\footnotesize
\setlength{\tabcolsep}{3.4pt}
\renewcommand{\arraystretch}{0.97}
\resizebox{\textwidth}{!}{%
\begin{tabular}{l ccccccc ccccccc}
\rowcolor{tabhead}
& \multicolumn{7}{c}{\Th{LLaMA-3.1-8B}} & \multicolumn{7}{c}{\Th{Qwen3-8B}} \\
\rowcolor{tabhead}
\Th{Method} & \Th{MMLU} & \Th{SciQA} & \Th{GSM8K} & \Th{MATH} & \Th{HumanE.} & \Th{MBPP} & \Th{Avg.} & \Th{MMLU} & \Th{SciQA} & \Th{GSM8K} & \Th{MATH} & \Th{HumanE.} & \Th{MBPP} & \Th{Avg.} \\
LoRA & 63.8 & 93.6 & 71.2 & 24.6 & 42.1 & 51.3 & 57.77 & 76.1 & 95.2 & 89.3 & 58.4 & 72.6 & 66.9 & 76.42 \\
DoRA & 64.4 & 93.7 & 71.7 & 24.8 & 43.0 & 52.0 & 58.27 & 76.5 & 95.3 & 89.4 & 59.2 & 73.3 & 67.4 & 76.85 \\
GoRA & 64.5 & 93.8 & 72.4 & 25.5 & 43.6 & 52.5 & 58.72 & 76.3 & 95.7 & 90.1 & 59.8 & 74.0 & 68.0 & 77.32 \\
\addlinespace[2pt]
HydraLoRA & 64.5 & 93.9 & 72.3 & 25.7 & 43.6 & 52.1 & 58.68 & 76.7 & 95.7 & 89.9 & 59.4 & 73.7 & 67.9 & 77.22 \\
MixLoRA & 64.3 & 94.3 & 73.2 & 26.0 & 44.2 & 53.0 & 59.17 & 76.6 & 95.7 & 90.1 & 59.7 & 74.3 & 67.8 & 77.37 \\
LD-MoLE & 64.8 & 94.5 & 73.5 & 26.0 & 44.5 & 53.1 & 59.40 & 76.7 & 95.9 & 90.8 & 60.4 & 74.5 & 68.5 & 77.80 \\
SMoRA & 64.4 & 94.6 & 73.0 & 25.9 & 44.6 & 53.0 & 59.25 & 76.7 & 95.9 & 90.2 & 60.4 & 74.4 & 68.1 & 77.62 \\
FlyLoRA & \textbf{65.1} & 94.6 & 73.7 & 26.4 & 45.2 & 53.4 & 59.73 & 76.8 & 95.8 & 90.5 & 61.0 & 75.2 & 68.6 & 77.98 \\
\addlinespace[2pt]
\ourname{\method{} (ours)} & 65.0 & \textbf{95.3} & \textbf{75.7} & \textbf{27.6} & \textbf{47.3} & \textbf{54.6} & \textbf{60.92} & \textbf{77.0} & \textbf{96.0} & \textbf{91.2} & \textbf{62.0} & \textbf{76.6} & \textbf{69.6} & \textbf{78.73} \\
\tabend
\end{tabular}}
\caption{\textbf{Knowledge, science, math, and code} (accuracy or pass@1, \%, mean of three seeds; one adapter per task). Parameter budgets and active ranks as in Table~\ref{tab:main-cs}.}
\label{tab:main-dom}
\end{table*}

\section{Experiments}
\label{sec:experiments}

\subsection{Setup}
\label{sec:setup}

\paragraph{Models and tasks.}
We adapt LLaMA-3.1-8B~\citep{grattafiori2024llama3} and Qwen3-8B-Base~\citep{yang2025qwen3}, and use Qwen3-0.6B/1.7B/4B/14B-Base for scaling. \emph{Commonsense reasoning} trains on Commonsense170K and evaluates on BoolQ, PIQA, SIQA, HellaSwag, WinoGrande, ARC-e/c, and OBQA~\citep{hu2023llmadapters,clark2019boolq,bisk2020piqa,sap2019siqa,zellers2019hellaswag,sakaguchi2020winogrande,clark2018arc,mihaylov2018obqa}. The \emph{domain} suite trains one adapter per task: MMLU with its auxiliary training set~\citep{hendrycks2021mmlu}, ScienceQA~\citep{lu2022scienceqa}, math on a 100K subset of MetaMathQA evaluated on GSM8K and MATH~\citep{yu2024metamath,cobbe2021gsm8k,hendrycks2021math}, and code on a 100K subset of CodeFeedback evaluated on HumanEval and MBPP~\citep{zheng2024opencodeinterpreter,chen2021humaneval,austin2021mbpp}. GLUE with DeBERTaV3-base~\citep{wang2019glue,he2023debertav3}, joint multi-task training, and forgetting are reported in Appendix~\ref{app:extra}.

\paragraph{Baselines.}
Single adapters: LoRA, DoRA~\citep{liu2024dora}, AdaLoRA, and GoRA. Mixtures: HydraLoRA, MixLoRA, AdaMoLE, LD-MoLE, SMoRA, and FlyLoRA. All adapters target the seven projection matrices and are matched to LoRA with $r{=}32$ in trainable parameters, except FlyLoRA, whose frozen down-projection halves them, and SMoRA, whose router adds a third. Rank-wise baselines use $32$ components with $k{=}8$ active, the configuration of FlyLoRA~\citep{zou2026flylora}.

\paragraph{Implementation.}
\method{} uses $r{=}32$, $\alpha{=}64$, a $5\%$ dense warm-up, SURE every $20$ steps with EMA $0.9$, and cap $\cu{=}\sqrt{2\log 32}$, with identical settings for every task and backbone. All methods share the optimizer, schedule, and data; we report the mean of three seeds (standard deviations and full settings in Appendices~\ref{app:setup} and~\ref{app:extra}).

\subsection{Main Results}
\label{sec:main}

\begin{figure*}[t]
\centering
\includegraphics[width=\textwidth]{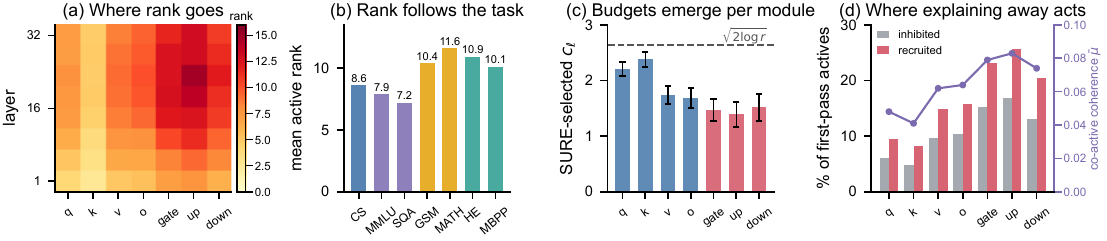}
\caption{\textbf{Where \method{} spends rank} (LLaMA-3.1-8B). (a) Mean active rank by layer and module: FFN projections in the middle and upper layers use the most components. (b) Mean active rank by task: math and code use more components than knowledge and science. (c) SURE-selected multiples $c_\ell$ ($\pm$ s.d.\ across layers) stay below the universal cap and are lower for FFN modules. (d) Explaining away inhibits and recruits the most components in FFN modules, where co-active values are most coherent (right axis).}
\label{fig:mechanism}
\end{figure*}

\begin{figure*}[t]
\centering
\includegraphics[width=\textwidth]{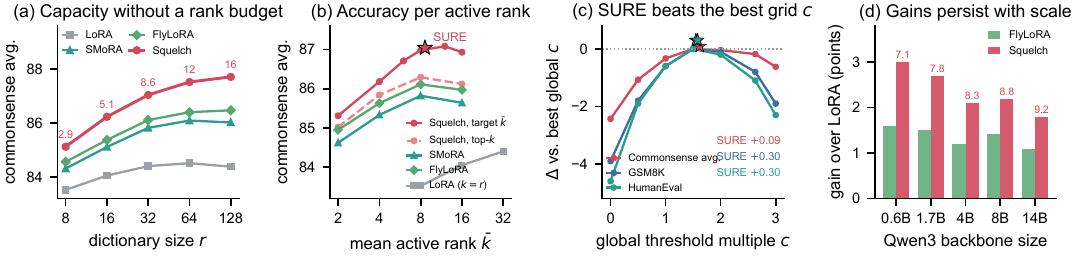}
\caption{\textbf{Capacity, budget, and threshold selection} (commonsense unless stated). (a) Accuracy vs.\ dictionary size $r$; numbers give \method{}'s mean active rank. (b) Accuracy vs.\ mean active rank: \method{} with a target rank (budgeted variant), top-$k$ variants, and LoRA; the star is SURE's choice. (c) Accuracy of global fixed multiples $c$ relative to the best one; stars mark per-module SURE. (d) Gain over LoRA across Qwen3 sizes; numbers give \method{}'s mean active rank.}
\label{fig:results}
\end{figure*}

\method{} attains the best average in all four settings (Tables~\ref{tab:main-cs} and~\ref{tab:main-dom}). On commonsense reasoning it reaches \oursCsL{} on LLaMA-3.1-8B and \oursCsQ{} on Qwen3-8B, $+\gainLoraCsL$ and $+\gainLoraCsQ$ over LoRA and $+\gainBestCsL$ and $+\gainBestCsQ$ over the strongest baseline, \bestNameCsL{}. On the domain suite, the margins over \bestNameDomL{} are $+\gainBestDomL$ and $+\gainBestDomQ$. Three patterns stand out. First, input-adaptive methods beat input-shared ones at comparable parameter budgets. LD-MoLE, SMoRA, and FlyLoRA all exceed AdaLoRA and GoRA, so where rank is spent matters more than how it is divided among modules. Second, \method{} is better than the rank-wise mixtures at a similar active rank ($\rankOurs$ vs.\ $8$) and without a router: the gain comes from choosing \emph{which and how many} components fire, not from activating more. Third, gains concentrate on tasks that require composition, namely ARC-c, WinoGrande, OBQA, GSM8K, and code. On MMLU, whose adaptation is closer to a uniform shift, FlyLoRA and \method{} are within $0.1$ on LLaMA, and on HellaSwag SMoRA is best on Qwen3; both are near saturation. The improvements exceed three seed standard deviations in every setting (Table~\ref{tab:seeds}).

\subsection{Ablations}
\label{sec:ablation}

Table~\ref{tab:ablation} removes one ingredient at a time. \emph{The floor must be per token}: a per-layer noise scale loses $0.46$, and a non-robust RMS estimate, inflated by the signal itself, over-squelches ($\bar k{=}5.7$). \emph{The threshold must be decided, not learned}: learning $c_\ell$ by the task loss drives it to near zero ($\bar k{=}27.9$) and falls back close to LoRA ($85.08$); the universal threshold without SURE is too conservative ($\bar k{=}5.1$, $-0.27$); a fixed top-$k$ at the same average rank loses $0.73$. \emph{Explaining away matters most among the three steps}: without it, the code is $\gapNoEA$ from the joint code instead of $\gapOurs$, the squelch keeps fewer components ($\bar k{=}\rankNoEA$), and accuracy drops by $\dropNoEA$. Blocking the gradient through $B^{\!\top}\!B$ costs $0.18$, the co-activation-aware credit of Proposition~\ref{prop:ea}(c). A penalty on co-active coherence partially substitutes for explaining away but adds nothing on top of it. The dense warm-up is necessary: without it, a third of the components never clear the floor.

\begin{table}[t]
\centering
\footnotesize
\setlength{\tabcolsep}{2.2pt}
\renewcommand{\arraystretch}{0.97}
\resizebox{\columnwidth}{!}{%
\begin{tabular}{l c c c c}
\rowcolor{tabhead}
\Th{Configuration} & \Th{CS} & \Th{Dom.} & \Th{$\bar k$} & \Th{Gap} \\
\ourname{\method{} (full)} & \textbf{87.04} & \textbf{60.92} & 8.6 & 0.09 \\
\addlinespace[2pt]
Per-layer EMA noise scale (not per token) & 86.58\,{\scriptsize$(-0.46)$} & 60.21 & 8.9 & 0.10 \\
RMS instead of MAD noise scale & 86.64\,{\scriptsize$(-0.40)$} & 60.30 & 5.7 & 0.08 \\
\addlinespace[2pt]
Universal $c{=}\sqrt{2\log r}$ (no SURE) & 86.77\,{\scriptsize$(-0.27)$} & 60.46 & 5.1 & 0.07 \\
$c$ learned by the task loss & 85.08\,{\scriptsize$(-1.96)$} & 58.62 & 27.9 & 0.31 \\
Top-$k$ at matched $k{=}9$ (no floor) & 86.31\,{\scriptsize$(-0.73)$} & 59.94 & 9.0 & --- \\
Hard instead of soft threshold & 86.85\,{\scriptsize$(-0.19)$} & 60.52 & 8.3 & 0.11 \\
\addlinespace[2pt]
No explaining away (marginal code) & 86.43\,{\scriptsize$(-0.61)$} & 60.05 & 6.9 & 0.27 \\
EA, stop-gradient through $B^{\!\top}\!B$ & 86.86\,{\scriptsize$(-0.18)$} & 60.63 & 8.4 & 0.09 \\
Coherence penalty instead of EA & 86.55\,{\scriptsize$(-0.49)$} & 60.19 & 6.8 & 0.19 \\
EA $+$ coherence penalty & 87.01\,{\scriptsize$(-0.03)$} & 60.89 & 8.5 & 0.07 \\
\addlinespace[2pt]
No dense warm-up & 85.69\,{\scriptsize$(-1.35)$} & 58.95 & 6.2 & 0.12 \\
Unnormalized keys & 86.49\,{\scriptsize$(-0.55)$} & 60.11 & 8.1 & 0.13 \\
\tabend
\end{tabular}}
\caption{\textbf{Ablations} on LLaMA-3.1-8B. CS/Dom.: commonsense/domain averages; $\bar k$: mean active rank; Gap: $\|z-z^\star\|/\|z^\star\|$ to the joint code at the same threshold.}
\label{tab:ablation}
\end{table}

\subsection{Mechanism Analysis}
\label{sec:mechanism}

\paragraph{Rank follows the input.}
Per-token ranks range from $0$ to over $20$ (Figure~\ref{fig:tokens}). Numerals and operators use $11.8$ components on average, content words $9.4$, function words $5.3$, and punctuation $3.9$. Across tasks, math and code use the most (Figure~\ref{fig:mechanism}b). Out of domain, the squelch closes more often: on MMLU prompts, a commonsense adapter leaves \rankZeroOOD\% of token--module pairs without active components, against \rankZeroIn\% in domain. This selectivity explains the smaller forgetting of \method{} ($-\forgetOurs$ vs.\ $-\forgetLora$ points for LoRA on held-out benchmarks; Table~\ref{tab:forget}).

\paragraph{Budgets emerge per module.}
SURE assigns lower multiples, and thus more components, to FFN projections than to attention queries and keys (Figure~\ref{fig:mechanism}c), and the resulting ranks peak in the upper-middle layers (Figure~\ref{fig:mechanism}a). The selected multiples stabilize within the first tenth of training (Figure~\ref{fig:dynamics}) and stay below the universal cap everywhere, so the cap acts only as a safeguard.

\paragraph{Explaining away acts where values overlap.}
Explaining away removes $5$--$17\%$ of first-pass components and recruits $8$--$26\%$ more, most in FFN modules, where the values of co-active components are most coherent (Figure~\ref{fig:mechanism}d). Restricting it to attention modules recovers $0.21$ of its $0.61$-point gain, and restricting it to FFN modules recovers $0.49$, in line with Proposition~\ref{prop:ea}(d). A controlled synthetic study with a known per-input rank (Appendix~\ref{app:synthetic}) shows the same division of labour: the squelch alone is conservative, and explaining away makes the active count track the true rank.

\subsection{Capacity, Budget, and Scaling}
\label{sec:scaling}

\paragraph{Larger dictionaries keep helping.}
Because each token uses only what it needs, \method{} converts a larger dictionary into accuracy without a larger per-token rank budget. From $r{=}8$ to $128$ it gains $2.6$ points while its mean active rank grows only from $2.9$ to $15.8$; LoRA saturates beyond $r{=}32$ and the top-$k$ mixtures around $r{=}64$ (Figure~\ref{fig:results}a).

\paragraph{The SURE choice sits at the knee.}
With a target average rank enforced by a dual variable, \method{} dominates top-$k$ rules at every budget, including its own top-$k$ variant (Figure~\ref{fig:results}b). The budget-free SURE point ($\bar k{=}8.6$) lies at the knee of this frontier, and per-module SURE beats the best \emph{global} multiple found by grid search on three tasks (Figure~\ref{fig:results}c). The budget is thus found, not tuned.

\paragraph{Scale and cost.}
The gain over LoRA persists from $0.6$B to $14$B parameters, and the mean active rank grows slowly with model size (Figure~\ref{fig:results}d). \method{} trains $\trainOverhead\%$ slower than unmerged LoRA and adds $\inferOverhead\%$ to inference latency, less than mixtures with learned routers (Table~\ref{tab:overhead}).

\paragraph{Beyond single tasks.}
The same selectivity helps when tasks share one adapter. Trained jointly on the six domain tasks, \method{} loses only \mtDropOurs{} points relative to separate adapters, against \mtDropLora{} for LoRA and \mtDropFly{} for FlyLoRA (Table~\ref{tab:multitask}), because tokens of different tasks clear the floor on different components. On GLUE with DeBERTaV3-base, it raises the average from \glueLora{} (LoRA) to \glueOurs{} with $2.9$ of $8$ components active per token (Table~\ref{tab:glue}).

\section{Conclusion}
\label{sec:conclusion}

How many rank-one components a token needs is a detection problem: a token's responses to a LoRA dictionary are a sparse signal on a noise floor. \method{} answers it with three classical steps, a robust per-token floor, a threshold chosen by unbiased risk, and one explaining-away step toward the joint sparse code. It needs neither a router nor a rank budget, adds no parameters to LoRA, and improves over strong single-adapter and mixture baselines across tasks and scales. The same view applies wherever a model combines many small components per input, including mixtures of full experts and sparse autoencoders.

\section*{Limitations}

The noise model is an idealization: responses of irrelevant components are only approximately Gaussian, and the median estimate is biased upward when many components are active (Appendix~\ref{app:more-theory}), which SURE compensates only in part. Explaining away is a single proximal step; it closes part of the gap to the joint code, and our synthetic study shows that it resolves redundancy in the code rather than disentangling the dictionary itself. Like every input-dependent adapter, \method{} cannot be merged into the backbone, and it adds a small inference overhead. Our experiments cover decoder models up to 14B parameters and one encoder, supervised fine-tuning, and English benchmarks; reinforcement-learning fine-tuning, multilingual adaptation, and larger models remain to be tested.

\section*{Ethics Statement}

\method{} is a general fine-tuning method and inherits the risks of the models and data to which it is applied, including biases in instruction data and the lower cost of specializing models for harmful uses. All datasets are public and used under their licenses; no human subjects or personal data are involved. Because \method{} leaves out-of-domain tokens closer to the frozen model, it reduces but does not remove the risk that fine-tuning erodes safety behavior; such behavior should be evaluated after adaptation. AI assistants were used for language editing and for checking LaTeX; every reference was checked against its primary record.

\bibliography{custom}

\appendix
\newcommand{\synDupCosNoEA}{0.17}
\newcommand{\synDupCosEA}{0.39}
\newcommand{\synDupPrNoEA}{0.87}
\newcommand{\synDupPrEA}{0.33}

\section{Extended Related Work}
\label{app:related}

\paragraph{LoRA and its reparameterizations.}
LoRA~\citep{hu2022lora} and its quantized form~\citep{dettmers2023qlora} made low-rank updates the default for adapting large models. Later work changes the parameterization or the optimization of the update. DoRA separates magnitude from direction~\citep{liu2024dora}; PiSSA initializes from principal singular directions~\citep{meng2024pissa}; LoRA-GA and LoRA-Pro align the low-rank gradient with the full one~\citep{wang2024loraga,wang2025lorapro}; rsLoRA and LoRA+ correct the scale and learning rates of the two factors~\citep{kalajdzievski2023rslora,hayou2024loraplus}; VeRA shares frozen random factors~\citep{kopiczko2024vera}; MELoRA and MoRA raise the rank of the update at a fixed parameter count~\citep{ren2024melora,jiang2024mora}. Two findings are relevant to our noise-floor view. LoRA's down-projection changes much less than its up-projection and can even be frozen at random initialization~\citep{zhang2023lorafa,zhu2024asymmetry}; the responses $Ax$ of most components therefore stay close to random projections of the input, which is the floor that \method{} estimates. LoRA also forgets less than full fine-tuning~\citep{biderman2024lora}, but its updates introduce new high-variance directions that account for much of the forgetting~\citep{shuttleworth2024illusion,xie2026intruder}. \method{} leaves the tokens on which no component clears the floor on the frozen path.

\paragraph{Rank allocation.}
Static allocation gives each module its own rank. AdaLoRA prunes singular values by sensitivity~\citep{zhang2023adalora}, SoRA gates ranks with a proximal $\ell_1$ step, i.e.\ soft thresholding of a gate vector that is shared by all inputs~\citep{ding2023sora}, and DyLoRA and IncreLoRA train nested or growing ranks~\citep{valipour2023dylora,zhang2023increlora}. ALoRA and AutoLoRA learn allocations by ablation scores or meta-learning~\citep{liu2024alora,zhang2024autolora}. Recent methods use gradient information~\citep{he2025gora}, beam search over ranks~\citep{gu2025beamlora}, elastic pruning and growth~\citep{chang2025elalora}, differentiable module selection~\citep{jiang2025diffora}, hypernetworks~\citep{zhang2025hyperadalora}, entropy~\citep{liu2026flexlora}, per-expert ranks for MoE backbones~\citep{deng2026drlora}, information-bottleneck filtering~\citep{xiao2026structlora}, variational relevance~\citep{doyle2025lrvd}, integrated gradients with uncertainty~\citep{cui2026igulora}, calibration-time gradient variance~\citep{sathyavageeswaran2026fimlora}, or nested representations that support several ranks at test time~\citep{modoranu2026matryoshkalora}. Gradient-based importance can fail, for example under reinforcement learning~\citep{sawant2026gradient}. All of these methods fix a module's rank for every input. RaNA allocates ranks and neurons per input to speed up inference~\citep{garcia2025rana}, and Flexi-LoRA predicts a rank from the complexity of the input~\citep{li2026flexilora}; \method{} obtains the per-token rank from the responses themselves and requires neither a predictor nor a budget.

\paragraph{Mixtures of LoRA experts.}
LoRAMoE, MixLoRA, MoLE, and MoLA route tokens or layers to LoRA experts~\citep{dou2024loramoe,li2024mixlora,wu2024mole,gao2024mola}; HydraLoRA shares the down-projection across asymmetric heads~\citep{tian2024hydralora}; MoSLoRA mixes subspaces~\citep{wu2024moslora}; and extremely light variants mix vectors or rank-one factors~\citep{zadouri2024pushing,zhu2023sira}. Multi-task variants target task conflicts~\citep{liu2024moelora,zhang2025lori,shi2026samora}. Routing has been refined with hybrid, learnable, or dynamic rules~\citep{li2025dynmole,zhuang2025ldmole,liu2024adamole}, orthogonality or diversity among experts~\citep{feng2025omoe}, communication between experts~\citep{mu2026talklora}, rotational gates~\citep{sun2026rotmole}, singular-value alignment~\citep{fan2025goat}, and expert-aware adaptation of MoE backbones~\citep{yang2026moe2lora}. Other designs place experts in the attention projections~\citep{li2025loramixer}, partition adapters into blocks with adaptive routing~\citep{barazandeh2026localized}, compose frozen tensor-train experts~\citep{kunwar2025ttloramoe}, or remove the discrete router so that the mixture can be merged~\citep{han2025furina}. Routing alone does not remove interference inside a shared adapter subspace~\citep{chowdhury2026routing}.

\paragraph{Rank-one experts.}
Fine-grained experts are the closest line of work. T-REX combines rank-one experts with semantic priors for the router~\citep{zhang2024trex}; MoDE shares the down-projection and routes over rank-one ``dyadic'' experts~\citep{ning2024mode}; SMoRA shows that routing among several LoRAs equals rank partitioning within one LoRA and treats every rank as an expert with a learned router and top-$k$~\citep{zhao2025smora}; FlyLoRA freezes a sparse random down-projection whose top-$k$ responses select the active columns of the up-projection, an implicit router~\citep{zou2026flylora}; and MoRAM views rank-one adapters as key--value memories that self-activate by their keys~\citep{lu2026moram}. Rank-one experts also appear as orthogonal experts obtained from the singular vectors of pretrained weights~\citep{yuan2025moore}, as reusable rank-wise experts in federated fine-tuning~\citep{wu2025smartfed}, and as an expert pool inside a single LoRA for continual learning~\citep{fa2026rank1pool,zou2026fly}. Their view of self-routing through each component's own key goes back to reading feed-forward layers as key--value memories~\citep{geva2021transformer}. \method{} keeps self-routing and adds what these methods fix by hand: how many components fire (a noise-calibrated threshold instead of top-$k$), how the threshold is set (an unbiased risk estimate instead of a budget), and how co-active components combine (a joint code instead of a sum).

\paragraph{Dynamic routing in sparse MoE.}
Sparse MoE layers route each token to a fixed number of experts~\citep{shazeer2017outrageously,fedus2022switch,jiang2024mixtral,dai2024deepseekmoe}, or let experts choose tokens~\citep{zhou2022expertchoice}, with auxiliary or bias-based load balancing~\citep{wang2024auxfree}. Several methods let the number of experts vary: top-$p$ routing~\citep{huang2024harder} and its sparsity-controlled variant~\citep{jin2025dtopp}, thresholds on router scores~\citep{yang2024xmoe}, null or zero-computation experts~\citep{zeng2024adamoe,jin2025moeplusplus}, top-any gating~\citep{guo2025dynmoe}, ReLU routers with an adaptive $\ell_1$ penalty~\citep{wang2025remoe}, and confidence-adaptive routing for MoE-LoRA~\citep{saliencro2026care}. These methods learn a router and control sparsity through a target or a penalty. The squelch has no router and sets sparsity from the responses' own noise level.

\paragraph{Activation sparsity and sparse autoencoders.}
Contextual and ReLU-induced sparsity speed up inference~\citep{liu2023dejavu,mirzadeh2024relu}. CATS and TEAL threshold hidden states at magnitudes chosen for a target sparsity~\citep{lee2024cats,liu2025teal}, R-Sparse and La RoSA combine sparsity with low-rank or rotated bases~\citep{zhang2025rsparse,liu2025larosa}, and recent methods allocate sparsity across projections by heavy-tail statistics or weight awareness~\citep{hou2026acttail,chen2026wisparse}. Sparse autoencoders learn overcomplete dictionaries of model activations with ReLU, TopK, JumpReLU, or BatchTopK codes~\citep{cunningham2024sparse,gao2025scaling,rajamanoharan2024jumprelu,bussmann2024batchtopk}, and Monet builds monosemantic experts into the architecture~\citep{park2025monet}. \method{} treats the adapter itself as a small dictionary and picks the sparsity of each token's code by a noise model rather than a target.

\paragraph{Shrinkage, sparse coding, and unrolling.}
Soft thresholding at the universal level is minimax up to a logarithmic factor~\citep{donoho1994ideal,donoho1995denoising}. Stein's unbiased risk estimate~\citep{stein1981estimation} makes the threshold data-adaptive~\citep{donoho1995adapting}, and the median absolute deviation is the standard robust scale estimate~\citep{hampel1974influence}. For overlapping dictionaries, the LASSO~\citep{tibshirani1996lasso} is solved by iterative shrinkage~\citep{daubechies2004iterative,beck2009fista}, whose accuracy depends on the coherence of the atoms~\citep{tropp2004greed}. Sparse coding with competition between units~\citep{olshausen1996emergence,rozell2008sparse} and unrolled shrinkage networks~\citep{chen2018lista,monga2021unrolling} learn such codes efficiently, and white-box transformers insert a shrinkage step into each block~\citep{yu2023crate}. Explaining away in \method{} is one such step, with the inhibition tied to the adapter's value dictionary rather than learned.

\section{Proofs}
\label{app:proofs}

Throughout, $u=\hA x$ are the responses, $\hs(x)=\operatorname{med}_i|u_i|/\Phi^{-1}(3/4)$, $\tau=c\,\hs(x)$, $z^0=\soft_\tau(u)$, $G=B^{\!\top}B$, $D=\mathrm{diag}(G)$, $M=D^{-1}(G-D)$, $\hat G=D^{-1/2}GD^{-1/2}$, $\eta=1/\lambda_{\max}(\hat G)$, and $z=\soft_{\eta\tau}(z^0+\eta(u-(I+M)z^0))$.

\subsection{Proposition~\ref{prop:lora}}

\emph{(i)} With $c=0$, $\tau=0$ and $z^0=u$. Without explaining away, $z=u$ and the layer computes $W_0x+sB\hA x$, a LoRA with down-projection $\hA$. Conversely, a LoRA with factors $(A,B)$ equals \method{} with keys $\hA$ and values $B\,\mathrm{diag}(\|a_i\|)$, so the two hypothesis classes coincide.

\emph{(ii)} For $\alpha>0$, $u(\alpha x)=\alpha u(x)$, and since the median of absolute values is positively homogeneous, $\hs(\alpha x)=\alpha\hs(x)$ and $\tau(\alpha x)=\alpha\tau(x)$. Soft thresholding satisfies $\soft_{\alpha t}(\alpha v)=\alpha\soft_t(v)$, so $z^0(\alpha x)=\alpha z^0(x)$. The explaining-away pre-activation is linear in $(u,z^0)$ and its threshold is $\eta\tau$, so $z(\alpha x)=\alpha z(x)$, and the supports of $z(\alpha x)$ and $z(x)$ coincide. The median, absolute values, and soft thresholding are continuous and piecewise linear, and compositions of such maps are again continuous and piecewise linear.

\emph{(iii)} Partition the input space into the finitely many polyhedral cones on which the index of the median, the signs of $u$, the supports of $z^0$ and $z$, and the signs of the active coordinates are constant. On each cone, $z=Lx$ for a fixed matrix $L$ whose rows vanish outside $\supp z$, so $\mathrm{rank}(L)\le|\supp z|$ and the update $sBLx$ has rank at most $|\supp z|$. \qed

\subsection{Proposition~\ref{prop:oracle}}

Rescale so that $\sigma=1$ and write $\lambda=\sqrt{2\log r}$ and $r_S(\lambda,\theta)=\E(\soft_\lambda(\theta+\varepsilon)-\theta)^2$ for one coordinate.

\emph{Three facts about $r_S$.} First, Stein's identity (Proposition~\ref{prop:sure}(a)) gives $r_S(\lambda,\theta)=\E\big[1-2\cdot\mathbbm{1}\{|y|\le\lambda\}+\min(y^2,\lambda^2)\big]\le1+\lambda^2$ with $y=\theta+\varepsilon$. Second, $\partial r_S/\partial\theta=2\theta\,\Pr(|y|\le\lambda)\in[0,2\theta]$ for $\theta\ge0$, so $r_S(\lambda,\theta)\le r_S(\lambda,0)+\theta^2$; together, $r_S(\lambda,\theta)\le r_S(\lambda,0)+\min(\theta^2,1+\lambda^2)$. Third, $r_S(\lambda,0)=2[(1+\lambda^2)\bar\Phi(\lambda)-\lambda\phi(\lambda)]$, and the Mills-ratio bound $\bar\Phi(\lambda)\le\phi(\lambda)(\lambda^{-1}-\lambda^{-3}+3\lambda^{-5})$ yields $r_S(\lambda,0)\le2\phi(\lambda)(2\lambda^{-3}+3\lambda^{-5})$. Since $\phi(\lambda)=1/(r\sqrt{2\pi})$, we get $r\,r_S(\lambda,0)\le1+\lambda^2$ for every $r\ge2$.

\emph{(a)} Summing over coordinates,
\begin{align*}
\E\|\soft_\lambda(u)-\theta\|^2&\le r\,r_S(\lambda,0)\\
&\quad+\textstyle\sum_i\min(\theta_i^2,1+\lambda^2)\\
&\le(1+\lambda^2)\big(1+\textstyle\sum_i\min(\theta_i^2,1)\big),
\end{align*}
using $\min(\theta^2,1+\lambda^2)\le(1+\lambda^2)\min(\theta^2,1)$. The oracle that keeps $u_i$ iff $|\theta_i|>1$ has risk $\sum_i\min(\theta_i^2,1)$, which gives the claim~\citep{donoho1994ideal}.

\emph{(b)} By the Mills ratio, $\Pr(|\varepsilon_i|>\lambda)=2\bar\Phi(\lambda)\le2\phi(\lambda)/\lambda=1/(r\sqrt{\pi\log r})$. A union bound over the at most $r$ components with $\theta_i=0$ gives the first statement. If $|\theta_i|\ge2\lambda$, the component is inactive only if $|\varepsilon_i|\ge|\theta_i|-\lambda\ge\lambda$, which has the same probability bound. \qed

\subsection{Proposition~\ref{prop:sure}}

\emph{(a)} Write $\soft_c(v)=v+g(v)$ with $g_i(v)=-\mathrm{sign}(v_i)\min(|v_i|,c)$. The map $g$ is Lipschitz, hence weakly differentiable, with $\partial g_i/\partial v_i=-\mathbbm{1}\{|v_i|\le c\}$ almost everywhere. Stein's lemma~\citep{stein1981estimation} gives $\E\|v+g(v)-\theta\|^2=r+\E\big[\|g(v)\|^2+2\nabla\!\cdot g(v)\big]$, and substituting $g$ yields Eq.~\eqref{eq:sure}. Summing over $n$ proves the claim.

\emph{(b)} Assume that the pairs $(\theta_n,v_n)$ are drawn i.i.d., as for tokens sampled from a data distribution, with $\E\|\theta_n\|^2<\infty$. For each coordinate, $h_c=1-2\,\mathbbm{1}\{|v|\le c\}+\min(v^2,c^2)-(\soft_c(v)-\theta)^2$ has mean zero by (a). The family $\{h_c:c\in[0,\cu]\}$ consists of indicators and min-functions that are monotone in $c$, and of the Lipschitz-in-$c$ family $(\soft_c(v)-\theta)^2$. It is a VC-type class with the integrable envelope $3+3\cu^2+2(v-\theta)^2$, so the uniform law of large numbers yields $\sup_c|\frac1{Nr}\sum_{n,i}h_c|\to0$ in probability. If $\hat c$ minimizes SURE and $c^\star$ minimizes the loss, the loss of $\hat c$ exceeds that of $c^\star$ by at most twice this supremum times $Nr$~\citep{donoho1995adapting}. \qed

\subsection{Proposition~\ref{prop:ea}}

\emph{(a)} Substitute $w=D^{1/2}z$. Then $F=f+g$ with $f(w)=\frac12w^{\!\top}\hat Gw-(D^{1/2}u)^{\!\top}w$, whose gradient is $\lambda_{\max}(\hat G)$-Lipschitz, and $g(w)=\tau\sum_iD_{ii}^{1/2}|w_i|$, whose proximal map is coordinatewise soft thresholding at $\eta\tau D_{ii}^{1/2}$. One proximal-gradient step from $w^0=D^{1/2}z^0$ is
\[
w^+=\soft_{\eta\tau D^{1/2}}\big(w^0-\eta(\hat Gw^0-D^{1/2}u)\big).
\]
Mapping back with $z=D^{-1/2}w$ and using $D^{-1/2}\soft_{\eta\tau D_{ii}^{1/2}}(D^{1/2}\cdot)=\soft_{\eta\tau}(\cdot)$ coordinatewise and $D^{-1}G=I+M$ gives Eq.~\eqref{eq:ea}. For $\eta\le1/\lambda_{\max}(\hat G)$, the sufficient-decrease inequality of proximal gradient descent, $F(w^+)\le F(w^0)-\frac1{2\eta}\|w^+-w^0\|^2$~\citep{beck2009fista}, gives $F(z)\le F(z^0)$, with equality iff $w^+=w^0$, i.e.\ iff $z^0$ is a fixed point of the proximal-gradient map. Since $F$ is convex, fixed points are exactly its minimizers.

\emph{(b)} Let $b_1=b_2=b$ and $u_1=u_2=u>\tau$. Then $\hat G=\begin{psmallmatrix}1&1\\1&1\end{psmallmatrix}$, $\lambda_{\max}=2$, $\eta=\frac12$, $M=\begin{psmallmatrix}0&1\\1&0\end{psmallmatrix}$, and $z^0=(u-\tau)\mathbf 1$, whose output is $2(u-\tau)b$. The pre-activation is $z^0+\frac12(u\mathbf 1-2z^0)=\frac u2\mathbf 1$, and thresholding at $\frac\tau2$ gives $z=\frac{u-\tau}2\mathbf 1$, whose output is $(u-\tau)b$. The objective depends on $z$ only through $t=z_1+z_2$ (for equal signs) as $\|b\|^2(\frac12t^2-ut+\tau|t|)$, which is minimized at $t=u-\tau$. The explaining-away output is therefore the LASSO output.

\emph{(c)} The coordinates of $z$ are $z_i=\soft_{\eta\tau}(p_i)$ with $p_i=z^0_i+\eta(u_i-z^0_i-\sum_{j\ne i}M_{ij}z^0_j)$ and $M_{ij}=G_{ij}/G_{ii}$. Treating $\eta$ as a constant (it is computed without gradient), $G_{ij}=G_{ji}$ for $i\ne j$ enters only $p_i$ and $p_j$:
\begin{align*}
-\frac{1}{\eta}\frac{\partial\mathcal L}{\partial G_{ij}}&=\frac{z^0_j}{G_{ii}}\,\mathbbm{1}\{z_i\ne0\}\,\frac{\partial\mathcal L}{\partial z_i}\\
&\quad+\frac{z^0_i}{G_{jj}}\,\mathbbm{1}\{z_j\ne0\}\,\frac{\partial\mathcal L}{\partial z_j}.
\end{align*}
Each term vanishes unless one component is active in $z^0$ and the other is active in $z$.

\emph{(d)} Let $D=I$, so that $G$ has unit diagonal, $M=G-I$, and $\hat G=G$. Let $S$ and $s=\mathrm{sign}(z^\star_S)$ be the common support and signs. The optimality conditions of \eqref{eq:lasso} on $S$ read $G_{SS}z^\star_S=u_S-\tau s$. The first pass satisfies $z^0_S=u_S-\tau s=G_{SS}z^\star_S$, so
\[
z^0_S-z^\star_S=(G_{SS}-I)\,z^\star_S,
\]
and $\|G_{SS}-I\|_2\le(|S|-1)\mu_S$ by Gershgorin's theorem, since $G_{SS}-I$ is symmetric with zero diagonal and off-diagonal entries bounded by $\mu_S$. With the same support and signs, the explaining-away step on $S$ is $z_S=z^0_S+\eta(u_S-G_{SS}z^0_S)-\eta\tau s=z^0_S+\eta G_{SS}(z^\star_S-z^0_S)$, hence $z_S-z^\star_S=(I-\eta G_{SS})(z^0_S-z^\star_S)$. Since $\eta\le1/\lambda_{\max}(G)\le1/\lambda_{\max}(G_{SS})$, the contraction factor $\|I-\eta G_{SS}\|_2=1-\eta\lambda_{\min}(G_{SS})$ is below one whenever $G_{SS}\succ0$. \qed

\section{Further Analysis}
\label{app:more-theory}

\paragraph{Bias of the median scale.}
If a fraction $\rho<\frac12$ of the components carries signal larger than every noise response, the median of $|u|$ is an order statistic of the noise alone, at level $q=\frac1{2(1-\rho)}$ of the $r(1-\rho)$ noise magnitudes. As $r\to\infty$,
\[
\frac{\hs}{\sigma}\to\frac{\Phi^{-1}\big(\tfrac12+\tfrac{1}{4(1-\rho)}\big)}{\Phi^{-1}(\tfrac34)},
\]
which equals $1$, $1.17$, $1.43$, and $1.90$ for $\rho=0,\frac18,\frac14,\frac38$ (Table~\ref{tab:numerics}). The estimate therefore errs toward caution: the more components a token activates, the higher its threshold, which makes the squelch self-limiting. SURE is then computed on responses whose noise variance is $(\sigma/\hs)^2<1$, which biases its minimizer toward smaller multiples and partly compensates. Re-estimating $\sigma$ from the sub-threshold responses (``iterated MAD'') removes most of the bias but changes accuracy by only $+0.02$ (Table~\ref{tab:sens}), so we keep the one-shot median.

\paragraph{How the active rank depends on the signal.}
The expected per-token rank has a closed form under the model of Proposition~\ref{prop:oracle}.

\begin{proposition}[Expected active rank]
\label{prop:rank}
Let $u=\theta+\sigma\varepsilon$ with $\varepsilon\sim\mathcal N(0,I_r)$, let $s=|\{i:\theta_i\ne0\}|$, $\zeta_i=|\theta_i|/\sigma$, and $\tau=c\sigma$. Then
\begin{align*}
\E|\supp z^0|&=\textstyle\sum_{i:\theta_i\ne0}\big[\bar\Phi(c-\zeta_i)+\bar\Phi(c+\zeta_i)\big]\\
&\quad+2(r-s)\,\bar\Phi(c).
\end{align*}
It increases with every $|\theta_i|$ and with $s$ and decreases with $c$. At $c=\sqrt{2\log r}$, the floor contributes at most $1/\sqrt{\pi\log r}$ active components in expectation.
\end{proposition}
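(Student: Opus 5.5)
The plan is to write the expected support size as a sum of per-coordinate activation probabilities. Soft thresholding zeroes exactly the coordinates with $|u_i|\le\tau$, so by linearity of expectation
\[
\E|\supp z^0|=\textstyle\sum_i\Pr(|u_i|>\tau)=\sum_i\Pr(|\zeta_i\,\mathrm{sign}(\theta_i)+\varepsilon_i|>c).
\]
Here I have divided by $\sigma$, which is legitimate because the statement takes $\tau=c\sigma$ with the true $\sigma$, not $\hs$. For a coordinate with $\theta_i\ne0$, symmetry of $\varepsilon_i$ lets me take $\theta_i>0$. The event then splits into $\varepsilon_i>c-\zeta_i$ and $\varepsilon_i<-c-\zeta_i$, whose probabilities are $\bar\Phi(c-\zeta_i)$ and $\Phi(-c-\zeta_i)=\bar\Phi(c+\zeta_i)$. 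For the $r-s$ null coordinates, the same computation with $\zeta_i=0$ gives $2\bar\Phi(c)$ each. Summing the two groups yields the displayed formula.

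Next come the monotonicity claims, which I would check one term at a time. In $c$, both $\bar\Phi(c\mp\zeta_i)$ and $\bar\Phi(c)$ are decreasing. In $\zeta\ge0$, the derivative of $g(\zeta)=\bar\Phi(c-\zeta)+\bar\Phi(c+\zeta)$ is $\phi(c-\zeta)-\phi(c+\zeta)$. This is nonnegative because $|c-\zeta|\le c+\zeta$ when $c,\zeta\ge0$, and strictly positive when $c,\zeta>0$. So the rank increases with every $|\theta_i|$.

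For monotonicity in $s$, I would interpret the claim as what happens when a null coordinate becomes a signal coordinate with some $\zeta>0$. Its contribution changes from $2\bar\Phi(c)=g(0)$ to $g(\zeta)$, and $g(\zeta)\ge g(0)$ by the previous step. I expect this interpretive point to be the only real subtlety: increasing $s$ alone, without specifying how the signal coordinates are placed, is not well defined, so I would state the comparison explicitly in these terms.

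Finally, at $c=\lambda=\sqrt{2\log r}$ the floor contributes $2(r-s)\bar\Phi(\lambda)\le 2r\bar\Phi(\lambda)$. I would bound this with the Mills-ratio estimate $\bar\Phi(\lambda)\le\phi(\lambda)/\lambda$ and the identity $\phi(\lambda)=1/(r\sqrt{2\pi})$, exactly as in the proof of Proposition~\ref{prop:oracle}(b). This gives
\[
2r\bar\Phi(\lambda)\le\frac{2}{\sqrt{2\pi}\,\sqrt{2\log r}}=\frac{1}{\sqrt{\pi\log r}},
\]
which is the stated bound on the expected number of active floor components.
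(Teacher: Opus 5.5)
Your proposal is correct and follows essentially the same route as the paper: linearity of expectation over the per-coordinate activation probabilities $\bar\Phi(c-\zeta_i)+\bar\Phi(c+\zeta_i)$, monotonicity from the derivative $\phi(c-\zeta_i)-\phi(c+\zeta_i)\ge0$, and the Mills-ratio bound with $\phi(\sqrt{2\log r})=1/(r\sqrt{2\pi})$ for the floor term. Your treatment of monotonicity in $s$ (a null coordinate contributing $g(0)=2\bar\Phi(c)$ becomes a signal coordinate contributing $g(\zeta)\ge g(0)$) spells out a step the paper leaves implicit.
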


\emph{Proof.} Component $i$ is active iff $|\theta_i+\sigma\varepsilon_i|>c\sigma$, which has probability $\bar\Phi(c-\zeta_i)+\bar\Phi(c+\zeta_i)$; linearity of expectation gives the formula. The derivative of this probability with respect to $\zeta_i$ is $\phi(c-\zeta_i)-\phi(c+\zeta_i)\ge0$ for $c,\zeta_i\ge0$. Each term decreases in $c$. The floor term is bounded by $2r\bar\Phi(\sqrt{2\log r})\le1/\sqrt{\pi\log r}$ by the Mills ratio, as in Proposition~\ref{prop:oracle}(b). \qed

The per-token rank is therefore the number of components whose signal clears the floor, smoothed by the noise. Tokens with more or stronger relevant components receive more rank, and the multiple $c_\ell$ trades false activations against missed signal. This trade-off is what SURE resolves.

\paragraph{Why a threshold learned by the task loss collapses.}
For active components, $\partial z_i/\partial\tau=-\mathrm{sign}(u_i)$, so lowering $\tau$ enlarges every active coefficient and activates new components; both enlarge the set of functions the adapter can fit. Gradient descent on $\tau$ therefore tends to decrease it until the code is nearly dense, as the ablation shows ($\bar k=27.9$). ReLU routers face the same pressure and need an $\ell_1$ penalty with an adaptive coefficient to keep their sparsity~\citep{wang2025remoe}. SURE avoids this because it measures the error with respect to the \emph{signal} in the responses, not the fit to the training loss.

\paragraph{Soft versus hard thresholds.}
Hard thresholding keeps $u_i$ unchanged above $\tau$ and makes the layer discontinuous at the threshold, so a small change of the input can switch a component with its full magnitude. Soft thresholding is continuous and 1-Lipschitz in $u$, which keeps training stable; JumpReLU autoencoders need straight-through estimators to train hard thresholds~\citep{rajamanoharan2024jumprelu}. Hard thresholds lose $0.19$ points (Table~\ref{tab:ablation}).

\paragraph{Iterating explaining away.}
$K$ steps of Eq.~\eqref{eq:ea} are ISTA in the metric $D$, so $F(z^K)-F^\star\le\lambda_{\max}(\hat G)\,\|z^0-z^\star\|_D^2/(2K)$~\citep{beck2009fista}. A second step adds $0.03$ points and a third none (Table~\ref{tab:sens}), so one step suffices in practice.

\paragraph{Relation to competitive sparse coding.}
Explaining away is one step of the locally competitive algorithm~\citep{rozell2008sparse}. The feed-forward drive is the response $u$, the lateral inhibition is the normalized value Gram $M$, and the step starts from the thresholded drive rather than from zero. Unlike LISTA~\citep{gregor2010lista,chen2018lista}, \method{} does not learn the inhibition: it is tied to the synthesis dictionary $B$, so \method{} keeps LoRA's parameter count and its inhibition stays consistent with the update it synthesizes.

\paragraph{Complexity.}
For a module with $d$ inputs and $d_{\mathrm{out}}$ outputs, LoRA costs $2r(d+d_{\mathrm{out}})$ FLOPs per token. \method{} adds $O(r)$ for the median (by selection), $2r^2$ for the explaining-away product, and $O(r)$ for thresholding, and it saves $2(r-|\supp z|)d_{\mathrm{out}}$ in the up-projection. For $r=32$ and $d=4096$, the additions amount to less than $1\%$ of LoRA's FLOPs. The Gram $B^{\!\top}B$ and its top eigenvalue (ten power iterations) are computed once per module and step.

\section{Numerical Verification}
\label{app:numerics}

Table~\ref{tab:numerics} reports exact or Monte Carlo computations (not experiments) that check each analytic statement; the script is \texttt{code/theory\_checks.py}.

\begin{table*}[t]
\centering
\footnotesize
\setlength{\tabcolsep}{5pt}
\begin{tabular}{p{0.52\textwidth} p{0.42\textwidth}}
\toprule
\textbf{Check} & \textbf{Result} \\
\midrule
Prop.~\ref{prop:oracle}(a): risk of $S_{\tau_U}$ vs.\ $(2\log r{+}1)(1{+}\sum_i\min(\theta_i^2,1))$, $r{=}32$ & bound holds in all 6 sparsity patterns; risk/oracle $\le 7.91$ \\
Same with the MAD noise estimate instead of known $\sigma$ & risk $28.41$ vs.\ $24.92$ ($s{=}4$); $117.51$ vs.\ $63.24$ ($s{=}8$) \\
Prop.~\ref{prop:oracle}(b): false activation with no signal, exact vs.\ bound $1/\sqrt{\pi\log r}$ & $0.238$ vs.\ $0.303$ \\
Prop.~\ref{prop:sure}(a): $|\E\,\mathrm{SURE}-\mathrm{risk}|/\mathrm{risk}$ (18 settings, $2{\times}10^5$ draws) & max $0.38\%$ \\
Prop.~\ref{prop:sure}(b): regret of the SURE threshold with $N$ tokens & 1.5\% ($N{=}16$), 0.52\% (256), 0.062\% (4096) \\
MAD scale bias $\hat\sigma/\sigma$ with active fraction $\rho=0,\tfrac18,\tfrac14,\tfrac38$ (emp./analytic) & 1.01/1.00, 1.18/1.17, 1.44/1.43, 1.90/1.90 \\
Prop.~\ref{prop:ea}(a): explaining away never increases $F$ (2{,}000 random dictionaries) & 100.0\% of cases; median 21\% of the gap to the LASSO optimum closed \\
Prop.~\ref{prop:ea}(d): $\|z^0_S-z^\star_S\|\le(|S|{-}1)\mu_S\|z^\star_S\|$ and contraction by $\|I-\eta G_{SS}\|_2$ (1{,}000 instances with matching supports) & 0 / 0 violations (the error bound is attained when $|S|{=}2$); median contraction 0.28 \\
Prop.~\ref{prop:ea}(b): duplicated values, $u{=}3$, $\tau{=}1$ & $\eta{=}0.50$; output 4.00 (first pass) $\to$ 2.00 (LASSO: 2.00) \\
Prop.~\ref{prop:ea}(c): $|\partial L/\partial G_{ij}|$ outside vs.\ inside co-active pairs (autograd) & $0.0e+00$ vs.\ $0.12$ \\
Prop.~\ref{prop:lora}(ii): $\|z(\alpha x)-\alpha z(x)\|_\infty/\|z(x)\|_\infty$, $\alpha{=}3.7$ & $2.9e-06$; active sets identical: yes \\
\bottomrule
\end{tabular}

\caption{Numerical checks of the analytic results (script \texttt{code/theory\_checks.py}). The MAD row explains the conservative bias discussed in Appendix~\ref{app:more-theory}.}
\label{tab:numerics}
\end{table*}

\section{Controlled Synthetic Study}
\label{app:synthetic}

\paragraph{Setup.}
Inputs $x\in\R^{64}$ come from $K=24$ concept clusters, $x=\mu_c+2\xi$ with $\mu_c,\xi\sim\mathcal N(0,I)$. A teacher with $32$ rank-one atoms assigns each cluster a support $S_c$ of size $|S_c|\in\{1,2,3,4,6,8\}$ and produces $y=\frac12\sum_{i\in S_c}b^\star_i\langle k^\star_i,x\rangle$. The key $k^\star_i$ is the normalized mean of the cluster centers that use atom $i$, plus noise, and the values $b^\star_i$ are random unit vectors, optionally sharing a common direction so that their pairwise coherence is about $0.5$. The true rank of an input is thus $|S_c|$. Students have $r=32$ components and are trained by squared error with Adam (learning rate $3\times10^{-3}$, cosine schedule, $3{,}000$ steps, batch $512$, three seeds). The squelch uses the same rules as in the main experiments (per-input MAD floor, SURE every 20 steps, $5\%$ warm-up). This study is a real computation; the script is \texttt{code/synthetic\_study.py}.

\begin{table*}[t]
\centering
\footnotesize
\setlength{\tabcolsep}{5pt}
\begin{tabular}{l ccc ccc}
\toprule
& \multicolumn{3}{c}{\textbf{Orthogonal values} ($\bar\mu\approx0$)} & \multicolumn{3}{c}{\textbf{Coherent values} ($\bar\mu\approx0.5$)} \\
\cmidrule(lr){2-4}\cmidrule(lr){5-7}
\textbf{Student} & \textbf{Rel.\ error} & $\bar{\bm k}$ & \textbf{Spearman} & \textbf{Rel.\ error} & $\bar{\bm k}$ & \textbf{Spearman} \\
\midrule
Dense LoRA ($r{=}32$) & $0.552\pm0.016$ & 32.0 & --- & $0.479\pm0.032$ & 32.0 & --- \\
Top-$4$ self-routing & $0.355\pm0.013$ & 4.0 & --- & $0.243\pm0.009$ & 4.0 & --- \\
Top-$8$ self-routing & $0.451\pm0.008$ & 8.0 & --- & $0.304\pm0.011$ & 8.0 & --- \\
\method{} without EA & $0.308\pm0.009$ & 3.1 & 0.37 & $0.293\pm0.035$ & 4.0 & 0.60 \\
\method{} & $0.326\pm0.066$ & 7.8 & 0.66 & $0.240\pm0.003$ & 10.5 & 0.84 \\
\midrule
\method{} w/o EA $+$ coherence penalty & $0.340\pm0.036$ & 2.6 & 0.46 & $0.308\pm0.033$ & 4.3 & 0.64 \\
\method{} $+$ coherence penalty & $0.322\pm0.019$ & 4.8 & 0.70 & $0.243\pm0.015$ & 11.0 & 0.82 \\
\bottomrule
\end{tabular}
\caption{\textbf{Controlled synthetic study} (real computation, mean $\pm$ s.d.\ over three seeds). Rel.\ error: $\|\hat y-y\|^2/\|y\|^2$ on held-out inputs; $\bar k$: mean active components; Spearman: rank correlation between the mean active count of each cluster and its true rank $|S_c|$ (undefined for fixed-$k$ students).}
\label{tab:synthetic}
\end{table*}

\begin{figure*}[t]
\centering
\includegraphics[width=\textwidth]{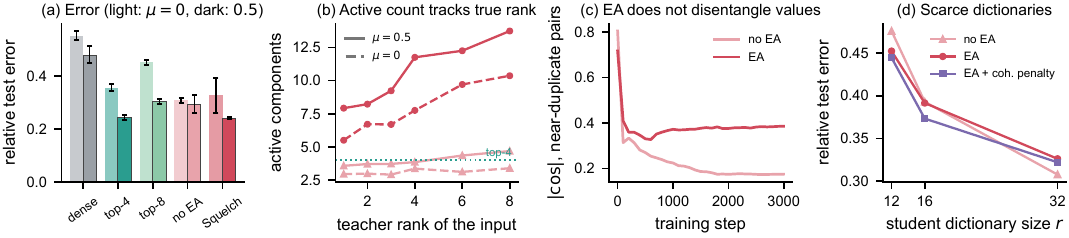}
\caption{\textbf{Synthetic study} (real computation). (a) Relative test error of each student for orthogonal (light) and coherent (dark) teacher values. (b) Mean active components as a function of the true rank of the input (solid: coherent values; dashed: orthogonal). (c) Mean $|\cos|$ between the values of $16$ near-duplicate component pairs during training. (d) Error for smaller student dictionaries.}
\label{fig:synthetic}
\end{figure*}

\paragraph{Findings.}
Table~\ref{tab:synthetic} and Figure~\ref{fig:synthetic} support three conclusions and qualify a fourth. (1) \emph{Input-adaptive rank matters.} A dense student cannot gate and has the largest error. Fixed top-$k$ students are better but cannot adapt to the true rank, and a larger $k$ hurts because it adds floor components to low-rank inputs. (2) \emph{The squelch alone is conservative.} Without explaining away, the SURE threshold keeps about three components per input regardless of the true rank (Spearman $0.37$ and $0.60$). (3) \emph{Explaining away makes the active count track the true rank} (Spearman $0.66$ and $0.84$) and gives the lowest error when the teacher's values are coherent ($0.240$ vs.\ $0.293$). With orthogonal values the two variants are on par. The mean error of the full method is higher here only because one seed converged to a poor solution ($0.420$); its median error is lower ($0.280$ vs.\ $0.314$). (4) \emph{Explaining away does not disentangle the dictionary.} Starting from $16$ near-duplicate pairs, the pairs drift apart without explaining away (mean $|\cos|$ \synDupCosNoEA{} after training) but stay closer with it (\synDupCosEA{}), and the effective rank of co-active values is lower (participation ratio over active count \synDupPrEA{} vs.\ \synDupPrNoEA{}). Explaining away makes redundancy harmless in the code, since redundant components share rather than duplicate their contribution, so training has less reason to separate them. A penalty on co-activation-weighted coherence restores separation but reduces error only when the dictionary is scarce (Table~\ref{tab:syn-capacity}). We therefore describe explaining away as joint credit assignment rather than as a disentangling regularizer.

\begin{table}[h]
\centering
\footnotesize
\setlength{\tabcolsep}{3.2pt}
\begin{tabular}{l cc cc cc}
\toprule
& \multicolumn{2}{c}{\textbf{No EA}} & \multicolumn{2}{c}{\textbf{EA}} & \multicolumn{2}{c}{\textbf{EA+pen.}} \\
\cmidrule(lr){2-3}\cmidrule(lr){4-5}\cmidrule(lr){6-7}
& \textbf{Err.} & $\bar k$ & \textbf{Err.} & $\bar k$ & \textbf{Err.} & $\bar k$ \\
\midrule
$r{=}12$ & $0.476$ & 2.4 & $0.452$ & 3.6 & $0.444$ & 3.4 \\
$r{=}16$ & $0.393$ & 2.8 & $0.392$ & 4.1 & $0.373$ & 4.3 \\
\bottomrule
\end{tabular}
\caption{Synthetic study with scarce student dictionaries (orthogonal teacher values, three seeds). ``pen.'': co-activation-weighted coherence penalty with weight $1$.}
\label{tab:syn-capacity}
\end{table}

\section{Implementation Details}
\label{app:impl}

\paragraph{Layer.}
\method{} replaces the forward pass of a LoRA module and keeps its parameters and initialization ($A$ Kaiming-uniform, $B=0$). Responses are computed in bfloat16, and the median, thresholds, and explaining-away step in float32. With $B=0$ at initialization, $G=0$ and $M$ is defined as $0$, so explaining away switches on gradually as the values grow. The step size uses ten power iterations on the cosine Gram with a floor of $\lambda_{\max}\ge1$. Gradients flow through $u$, $z^0$, $z$, and $G$; $\hs$, $c_\ell$, and $\eta$ are treated as constants.

\paragraph{SURE controller.}
Every $20$ steps and for each module, we standardize the responses of up to $4{,}096$ tokens of the current batch, evaluate SURE on $65$ equally spaced multiples in $[0,\cu]$, and take the minimizer. If the SureShrink sparsity statistic $\frac1{Nr}\sum(v^2-1)\le(\log_2Nr)^{3/2}/\sqrt{Nr}$ indicates no signal, the controller returns $\cu$~\citep{donoho1995adapting}. The multiple is smoothed with an EMA of $0.9$ and ramped linearly from zero during the first $5\%$ of steps. The controller costs $O(65\,Nr)$ operations per module per update, less than $0.2\%$ of the step time.

\paragraph{Budgeted variant.}
For deployments that need a fixed average rank, the controller is replaced by dual ascent on the constraint $\E|\supp z|=\bar k$: $c_\ell\leftarrow c_\ell+\kappa(\hat k_\ell-\bar k)$ with $\kappa=0.01$. This variant produces the curve in Figure~\ref{fig:results}b.

\paragraph{Inference.}
The update depends on the input and cannot be merged into $W_0$. We batch the down-projection as in LoRA, compute the code, and apply the up-projection only to the active columns of each token through a gathered matrix product; for batch sizes above $16$, a dense product with the sparse code is equally fast.

\section{Experimental Setup}
\label{app:setup}

\paragraph{Data.}
\emph{Commonsense}: the Commonsense170K training mixture of LLM-Adapters~\citep{hu2023llmadapters}, evaluated on the eight test sets. \emph{Knowledge}: the MMLU auxiliary training set, evaluated on the MMLU test set~\citep{hendrycks2021mmlu}. \emph{Science}: ScienceQA questions with their textual context~\citep{lu2022scienceqa}. \emph{Math}: $100$K examples of MetaMathQA~\citep{yu2024metamath}, evaluated on GSM8K~\citep{cobbe2021gsm8k} and on the MATH test set~\citep{hendrycks2021math}. \emph{Code}: $100$K examples of CodeFeedback~\citep{zheng2024opencodeinterpreter}, evaluated on HumanEval~\citep{chen2021humaneval} and MBPP~\citep{austin2021mbpp}. \emph{GLUE}~\citep{wang2019glue} follows the AdaLoRA protocol with DeBERTaV3-base~\citep{he2023debertav3,zhang2023adalora}.

\begin{table}[h]
\centering
\footnotesize
\setlength{\tabcolsep}{3pt}
\resizebox{\columnwidth}{!}{%
\begin{tabular}{l l r l}
\toprule
\textbf{Suite} & \textbf{Training data} & \textbf{Train} & \textbf{Evaluation (test size)} \\
\midrule
Commonsense & Commonsense170K & 170K & BoolQ (3{,}270), PIQA (1{,}838), \\
 & & & SIQA (1{,}954), HellaSwag (10{,}042), \\
 & & & WinoGrande (1{,}267), ARC-e (2{,}376), \\
 & & & ARC-c (1{,}172), OBQA (500) \\
Knowledge & MMLU auxiliary train & 99.8K & MMLU test (14{,}042) \\
Science & ScienceQA train & 12.7K & ScienceQA test (4{,}241) \\
Math & MetaMathQA subset & 100K & GSM8K (1{,}319), MATH (5{,}000) \\
Code & CodeFeedback subset & 100K & HumanEval (164), MBPP (500) \\
NLU & GLUE train splits & 2.5K--393K & GLUE dev sets \\
\bottomrule
\end{tabular}}
\caption{Datasets. For multi-task training, the five domain training sets are concatenated and sampled in proportion to their sizes.}
\label{tab:data}
\end{table}

\paragraph{Prompts.}
Commonsense and math use the instruction template of LLM-Adapters~\citep{hu2023llmadapters,zou2025utility}: ``Below is an instruction that describes a task. Write a response that appropriately completes the request. \#\#\# Instruction: \{question\} \#\#\# Response:''. For multiple-choice tasks, the instruction lists the options, and the answer is the option name extracted from the response. Code uses the instruction--response format of CodeFeedback~\citep{zheng2024opencodeinterpreter}; the HumanEval and MBPP prompts contain the function signature and docstring, or the problem statement and test examples. MMLU and ScienceQA present the question with its lettered options and ask for the letter.

\paragraph{Evaluation.}
Commonsense tasks use the LLM-Adapters prompts and answer extraction with greedy decoding. MMLU and ScienceQA report accuracy with zero-shot prompts after fine-tuning. GSM8K and MATH report exact match of the final answer, and HumanEval and MBPP report pass@1 with greedy decoding. All methods share prompts, decoding, and answer extraction.

\paragraph{Hyperparameters.}
Table~\ref{tab:hparams} lists the shared configuration. \method{}'s own settings were chosen on a held-out $5\%$ of Commonsense170K with Qwen3-1.7B and then fixed for every task and backbone.

\begin{table}[h]
\centering
\footnotesize
\setlength{\tabcolsep}{3pt}
\resizebox{\columnwidth}{!}{%
\begin{tabular}{l l}
\toprule
\textbf{Hyperparameter} & \textbf{Value} \\
\midrule
Target modules & q, k, v, o, gate, up, down \\
Rank / dictionary size $r$ & 32 \\
Scaling $\alpha$ & 64 \\
Optimizer & AdamW~\citep{loshchilov2019adamw}, $\beta{=}(0.9,0.999)$, wd $0$ \\
Learning rate & $2{\times}10^{-4}$ (CS, code), $10^{-4}$ (math) \\
Schedule & linear warm-up 100 steps, cosine \\
Epochs & 3 (CS, MMLU, SciQA), 2 (math, code) \\
Effective batch size & 32 sequences \\
Max length & 256 (CS), 1024 (others) \\
Precision & bfloat16 \\
Seeds & 3 \\
\midrule
\multicolumn{2}{l}{\textit{\method{}}}\\
Dense warm-up & $5\%$ of steps \\
SURE interval / EMA & 20 steps / 0.9 \\
Cap $\cu$ & $\sqrt{2\log r}=2.63$ \\
SURE grid / tokens & 65 values / $\le4{,}096$ \\
Explaining-away steps & 1 \\
\midrule
\multicolumn{2}{l}{\textit{GLUE (DeBERTaV3-base)}}\\
Rank / $\alpha$ & 8 / 16 \\
Learning rate & $5{\times}10^{-4}$--$1.2{\times}10^{-3}$ per task \\
\bottomrule
\end{tabular}}
\caption{Training configuration shared by all methods, and \method{}'s settings.}
\label{tab:hparams}
\end{table}

\paragraph{Baselines.}
LoRA and DoRA use $r=32$ and $\alpha=64$. AdaLoRA starts at rank $48$ and prunes to an average of $32$. GoRA uses its default allocation with an average rank of $32$. HydraLoRA shares a rank-$16$ down-projection among three up-projection heads. MixLoRA, AdaMoLE, and LD-MoLE use eight rank-$4$ experts with top-$2$, adaptive-threshold, and learned dynamic routing, respectively. SMoRA uses $32$ rank-one experts with a learned router and top-$8$; FlyLoRA uses $r=32$ with a frozen sparse random down-projection of density $8/32$ and top-$8$. Baselines use their published auxiliary losses and load balancing. Learning rates were tuned for each baseline over $\{5{\times}10^{-5},10^{-4},2{\times}10^{-4},3{\times}10^{-4}\}$ on the same held-out split.

\paragraph{Compute.}
All experiments ran on nodes with $8{\times}$H100 80GB GPUs. A commonsense run on an 8B backbone takes about $6$ GPU-hours, and a domain run $3$--$9$ GPU-hours. The full study used roughly $5\times10^3$ GPU-hours.

\section{Additional Results}
\label{app:extra}

\paragraph{Seed variability.}
Table~\ref{tab:seeds} reports standard deviations over three seeds. \method{}'s margin over the strongest baseline exceeds three standard deviations in all four settings, and its variance is the lowest among the compared methods.

\begin{table}[h]
\centering
\footnotesize
\setlength{\tabcolsep}{2.4pt}
\resizebox{\columnwidth}{!}{%
\begin{tabular}{l cc cc}
\rowcolor{tabhead}
& \multicolumn{2}{c}{\Th{LLaMA-3.1-8B}} & \multicolumn{2}{c}{\Th{Qwen3-8B}} \\
\rowcolor{tabhead}
\Th{Method} & \Th{CS} & \Th{Dom.} & \Th{CS} & \Th{Dom.} \\
LoRA & $84.40\pm0.21$ & $57.77\pm0.38$ & $87.10\pm0.17$ & $76.42\pm0.34$ \\
LD-MoLE & $86.05\pm0.19$ & $59.40\pm0.35$ & $88.40\pm0.16$ & $77.80\pm0.31$ \\
SMoRA & $85.83\pm0.22$ & $59.25\pm0.39$ & $88.40\pm0.18$ & $77.62\pm0.33$ \\
FlyLoRA & $86.11\pm0.18$ & $59.73\pm0.33$ & $88.51\pm0.15$ & $77.98\pm0.30$ \\
\ourname{\method{}} & $87.04\pm0.15$ & $60.92\pm0.29$ & $89.28\pm0.13$ & $78.73\pm0.23$ \\
\tabend
\end{tabular}}
\caption{Averages over three seeds (mean $\pm$ s.d.).}
\label{tab:seeds}
\end{table}

\paragraph{GLUE.}
Table~\ref{tab:glue} follows the rank-allocation protocol on GLUE with DeBERTaV3-base and $r=8$. \method{} is best on average (\glueOurs{} vs.\ \glueBest{} for the strongest baseline and \glueLora{} for LoRA) with a mean active rank of $2.9$ of $8$. The gains are largest on the small datasets (RTE, MRPC, CoLA), where cross-talk between the components of a dense adapter is hardest to average out.

\begin{table*}[t]
\centering
\footnotesize
\setlength{\tabcolsep}{5pt}
\begin{tabular}{l c cccccccc c}
\rowcolor{tabhead}
\Th{Method} & \Th{Params (M)} & \Th{MNLI} & \Th{SST-2} & \Th{CoLA} & \Th{QQP} & \Th{QNLI} & \Th{RTE} & \Th{MRPC} & \Th{STS-B} & \Th{Avg.} \\
LoRA & 1.33 & 90.4 & 95.1 & 69.6 & 91.6 & 93.8 & 85.6 & 89.7 & 91.5 & 88.41 \\
AdaLoRA & 1.27 & 90.7 & 95.3 & 70.6 & 91.9 & 94.0 & 86.9 & 90.5 & 91.7 & 88.95 \\
SoRA & 0.91 & 90.8 & 95.5 & 70.5 & 91.7 & 94.1 & 86.9 & 90.7 & 91.8 & 89.00 \\
GoRA & 1.33 & 90.8 & 95.5 & 71.2 & 92.1 & 94.0 & 87.2 & 90.8 & 91.8 & 89.18 \\
ElaLoRA & 1.30 & \textbf{90.9} & \textbf{95.6} & 70.9 & 92.0 & 94.2 & 87.0 & 91.0 & 91.9 & 89.19 \\
SMoRA & 1.92 & 90.7 & 95.3 & 71.0 & 91.9 & 94.2 & 87.0 & 90.6 & 91.6 & 89.04 \\
FlyLoRA & 0.67 & 90.7 & 95.5 & 71.1 & 91.9 & 94.3 & 87.5 & 91.0 & 91.7 & 89.21 \\
\ourname{\method{}} & 1.33 & \textbf{90.9} & \textbf{95.6} & \textbf{71.9} & \textbf{92.4} & \textbf{94.6} & \textbf{88.4} & \textbf{91.7} & \textbf{92.2} & \textbf{89.71} \\
\tabend
\end{tabular}
\caption{\textbf{GLUE} with DeBERTaV3-base (matched accuracy for MNLI, Matthews correlation for CoLA, Pearson correlation for STS-B, accuracy otherwise; mean of three seeds).}
\label{tab:glue}
\end{table*}

\paragraph{Joint multi-task training.}
Table~\ref{tab:multitask} trains one adapter on the union of the six domain training sets. Mixing tasks costs LoRA \mtDropLora{} points relative to separate adapters, FlyLoRA \mtDropFly{}, and \method{} only \mtDropOurs{}. Tokens from different tasks activate different components, so the squelch separates the tasks without task labels.

\begin{table}[h]
\centering
\footnotesize
\setlength{\tabcolsep}{2.2pt}
\resizebox{\columnwidth}{!}{%
\begin{tabular}{l cccccc c c}
\rowcolor{tabhead}
\Th{Method} & \Th{MMLU} & \Th{SciQA} & \Th{GSM8K} & \Th{MATH} & \Th{HE} & \Th{MBPP} & \Th{Avg.} & \Th{$\Delta$} \\
LoRA & 63.1 & 92.4 & 69.3 & 23.4 & 39.8 & 49.6 & 56.27 & $-1.50$ \\
MixLoRA & 64.0 & 93.8 & 72.0 & 25.3 & 42.9 & 52.1 & 58.35 & $-0.82$ \\
LD-MoLE & 64.4 & 94.0 & 72.5 & 25.4 & 43.3 & 52.3 & 58.65 & $-0.75$ \\
SMoRA & 64.3 & 94.3 & 72.4 & 25.5 & 43.6 & 52.4 & 58.75 & $-0.50$ \\
FlyLoRA & 64.6 & 94.1 & 72.6 & 25.7 & 43.9 & 52.4 & 58.88 & $-0.85$ \\
\ourname{\method{}} & \textbf{64.9} & \textbf{95.0} & \textbf{75.0} & \textbf{27.1} & \textbf{46.6} & \textbf{54.1} & \textbf{60.45} & $-0.47$ \\
\tabend
\end{tabular}}
\caption{Joint multi-task training on LLaMA-3.1-8B. $\Delta$: change relative to separate single-task adapters (Table~\ref{tab:main-dom}).}
\label{tab:multitask}
\end{table}

\paragraph{Forgetting.}
Table~\ref{tab:forget} evaluates held-out general benchmarks after commonsense fine-tuning. \method{} loses \forgetOurs{} points on average against \forgetLora{} for LoRA and \forgetFly{} for FlyLoRA, consistent with the larger fraction of out-of-domain tokens that no component adapts (\S\ref{sec:mechanism}).

\begin{table}[h]
\centering
\footnotesize
\setlength{\tabcolsep}{3.5pt}
\begin{tabular}{l ccc c}
\rowcolor{tabhead}
\Th{Model} & \Th{MMLU} & \Th{GSM8K} & \Th{HumanE.} & \Th{Avg.\ drop} \\
Base model & 66.4 & 56.1 & 37.2 & --- \\
\addlinespace[2pt]
LoRA & 63.9 & 50.7 & 33.5 & $-3.87$ \\
DoRA & 64.1 & 51.2 & 33.9 & $-3.50$ \\
MixLoRA & 64.3 & 51.9 & 34.2 & $-3.10$ \\
SMoRA & 64.6 & 52.2 & 34.6 & $-2.77$ \\
FlyLoRA & 65.0 & 53.0 & 35.1 & $-2.20$ \\
\ourname{\method{}} & 65.6 & 54.4 & 36.3 & $-1.13$ \\
\tabend
\end{tabular}
\caption{Held-out accuracy after commonsense fine-tuning on LLaMA-3.1-8B (MMLU 5-shot, GSM8K 8-shot, HumanEval zero-shot).}
\label{tab:forget}
\end{table}

\paragraph{Scaling and dictionary size.}
Tables~\ref{tab:scale} and~\ref{tab:dict} give the numbers behind Figure~\ref{fig:results}a,d. The margin over FlyLoRA shrinks slightly with scale, from $1.4$ to $0.7$ points, as larger backbones need less adaptation, while the mean active rank grows slowly. With larger dictionaries the active rank grows sublinearly (from $2.9$ at $r{=}8$ to $15.8$ at $r{=}128$), so the extra parameters are spent where the responses rise above the floor.

\begin{table}[h]
\centering
\footnotesize
\setlength{\tabcolsep}{2.6pt}
\resizebox{\columnwidth}{!}{%
\begin{tabular}{l ccccc}
\rowcolor{tabhead}
\Th{Method} & \Th{0.6B} & \Th{1.7B} & \Th{4B} & \Th{8B} & \Th{14B} \\
LoRA & 68.90 & 77.40 & 83.80 & 87.10 & 89.00 \\
FlyLoRA & 70.50 & 78.90 & 85.00 & 88.51 & 90.10 \\
\ourname{\method{}} & 71.90 & 80.10 & 85.90 & 89.28 & 90.80 \\
\addlinespace[2pt]
$\Delta$ vs.\ FlyLoRA & $+1.40$ & $+1.20$ & $+0.90$ & $+0.77$ & $+0.70$ \\
\method{} active rank & 7.1 & 7.8 & 8.3 & 8.8 & 9.2 \\
\tabend
\end{tabular}}
\caption{Commonsense average across Qwen3 sizes.}
\label{tab:scale}
\end{table}

\begin{table}[h]
\centering
\footnotesize
\setlength{\tabcolsep}{3.0pt}
\resizebox{\columnwidth}{!}{%
\begin{tabular}{l ccccc}
\rowcolor{tabhead}
\Th{Dictionary size $r$} & \Th{8} & \Th{16} & \Th{32} & \Th{64} & \Th{128} \\
Params (\%) & 0.26 & 0.52 & 1.04 & 2.09 & 4.17 \\
\addlinespace[2pt]
LoRA & 83.52 & 84.05 & 84.40 & 84.51 & 84.38 \\
SMoRA ($k{=}r/4$) & 84.31 & 85.11 & 85.82 & 86.09 & 86.02 \\
FlyLoRA ($k{=}r/4$) & 84.56 & 85.37 & 86.11 & 86.40 & 86.47 \\
\ourname{\method{}} & 85.12 & 86.23 & 87.04 & 87.52 & 87.71 \\
\method{} active rank & 2.9 & 5.1 & 8.6 & 12.3 & 15.8 \\
\tabend
\end{tabular}}
\caption{Commonsense average on LLaMA-3.1-8B for different dictionary sizes.}
\label{tab:dict}
\end{table}

\paragraph{Sensitivity.}
Table~\ref{tab:sens} varies \method{}'s own settings. Most settings stay within $0.5$ points of the default, and the defaults are at or near the best value of each sweep. The exceptions are removing or shortening the warm-up, and replacing the robust scale by the analytic $\|x\|_2/\sqrt d$, which ignores that trained keys concentrate on the principal directions of the input. The fixed step $\eta=1$ overshoots on redundant pairs, which is why the step size is set from the value Gram.

\begin{table}[h]
\centering
\footnotesize
\setlength{\tabcolsep}{4pt}
\begin{tabular}{l c}
\toprule
\textbf{Setting} & \textbf{CS avg.} \\
\midrule
\multicolumn{2}{l}{\textit{Dense warm-up (fraction of steps)}}\\
\quad 0.00 & 85.69 \\
\quad 0.01 & 86.52 \\
\quad 0.03 & 86.93 \\
\quad 0.05 & \textbf{87.04} \\
\quad 0.10 & 86.99 \\
\quad 0.20 & 86.81 \\
\multicolumn{2}{l}{\textit{Cap on $c$}}\\
\quad 2.0 & 86.90 \\
\quad $\sqrt{2\log r}$ & \textbf{87.04} \\
\quad 3.5 & 87.01 \\
\quad $\infty$ & 86.98 \\
\multicolumn{2}{l}{\textit{SURE update interval (steps)}}\\
\quad 1 & \textbf{87.06} \\
\quad 10 & 87.05 \\
\quad 20 & 87.04 \\
\quad 50 & 87.00 \\
\quad 100 & 86.91 \\
\multicolumn{2}{l}{\textit{EMA factor on $c$}}\\
\quad 0.0 & 86.95 \\
\quad 0.5 & 87.01 \\
\quad 0.9 & \textbf{87.04} \\
\quad 0.99 & 86.97 \\
\multicolumn{2}{l}{\textit{Explaining-away step}}\\
\quad $\eta{=}1/\lambda_{\max}$ (default) & 87.04 \\
\quad $\eta{=}1/2$ fixed & 86.99 \\
\quad $\eta{=}1$ fixed & 86.62 \\
\quad two EA steps & \textbf{87.07} \\
\quad three EA steps & 87.06 \\
\multicolumn{2}{l}{\textit{Noise-scale estimator}}\\
\quad MAD (default) & 87.04 \\
\quad RMS & 86.64 \\
\quad per-layer EMA & 86.58 \\
\quad iterated MAD & \textbf{87.06} \\
\quad $\|x\|_2/\sqrt{d}$ & 86.21 \\
\multicolumn{2}{l}{\textit{Scaling $\alpha$ ($r{=}32$)}}\\
\quad 16 & 86.71 \\
\quad 32 & 86.94 \\
\quad 64 & \textbf{87.04} \\
\quad 128 & 86.88 \\
\bottomrule
\end{tabular}
\caption{Sensitivity on LLaMA-3.1-8B (commonsense average; bold: best in each block). Defaults: warm-up $0.05$, cap $\sqrt{2\log r}$, interval $20$, EMA $0.9$, $\eta=1/\lambda_{\max}$, MAD, $\alpha=64$.}
\label{tab:sens}
\end{table}

\paragraph{Cost.}
Table~\ref{tab:overhead} compares training time, inference latency, and memory. \method{} trains \trainOverhead\% slower than LoRA and runs \inferOverhead\% slower at inference (unmerged, batch 1), cheaper than routed mixtures and DoRA. Merged LoRA is $12\%$ faster than unmerged LoRA; no input-dependent adapter can be merged.

\begin{table}[h]
\centering
\footnotesize
\setlength{\tabcolsep}{2.6pt}
\resizebox{\columnwidth}{!}{%
\begin{tabular}{l c c c c c}
\rowcolor{tabhead}
\Th{Method} & \Th{Params (\%)} & \Th{Rank} & \Th{Train} & \Th{Latency} & \Th{Mem.\ (GB)} \\
LoRA & 1.04 & 32 & 1.00$\times$ & 1.00$\times$ & 38.2 \\
DoRA & 1.05 & 32 & 1.27$\times$ & 1.19$\times$ & 41.6 \\
HydraLoRA & 1.05 & 16 & 1.08$\times$ & 1.06$\times$ & 38.6 \\
MixLoRA & 1.07 & 8 & 1.16$\times$ & 1.14$\times$ & 39.4 \\
LD-MoLE & 1.07 & 10.8 & 1.19$\times$ & 1.17$\times$ & 39.7 \\
SMoRA & 1.53 & 8 & 1.13$\times$ & 1.12$\times$ & 39.9 \\
FlyLoRA & 0.55 & 8 & 1.02$\times$ & 1.03$\times$ & 36.1 \\
\ourname{\method{}} & 1.04 & 8.6 & 1.05$\times$ & 1.05$\times$ & 38.7 \\
\tabend
\end{tabular}}
\caption{Cost on LLaMA-3.1-8B (training time and inference latency relative to unmerged LoRA; peak training memory per GPU).}
\label{tab:overhead}
\end{table}

\paragraph{Per-module statistics.}
Table~\ref{tab:modules} collects the per-module numbers behind Figure~\ref{fig:mechanism}. Modules with more coherent co-active values have larger gaps between the marginal and the joint code, and explaining away removes $56$--$70\%$ of the gap, the most in FFN modules.

\begin{table}[h]
\centering
\footnotesize
\setlength{\tabcolsep}{2.4pt}
\resizebox{\columnwidth}{!}{%
\begin{tabular}{l c c c c c c c}
\toprule
& & & \multicolumn{2}{c}{\textbf{EA (\% of actives)}} & & \multicolumn{2}{c}{\textbf{Gap to joint code}} \\
\cmidrule(lr){4-5}\cmidrule(lr){7-8}
\textbf{Module} & $\bar{\bm k}$ & $c_\ell$ & \textbf{inhib.} & \textbf{recr.} & $\bar\mu$ & \textbf{marginal} & \textbf{EA} \\
\midrule
\texttt{q} & 6.6 & 2.21 & 6.1 & 9.4 & 0.048 & 0.16 & 0.07 \\
\texttt{k} & 4.1 & 2.38 & 4.8 & 8.2 & 0.041 & 0.14 & 0.06 \\
\texttt{v} & 8.0 & 1.74 & 9.7 & 14.9 & 0.062 & 0.23 & 0.08 \\
\texttt{o} & 8.4 & 1.69 & 10.4 & 15.8 & 0.064 & 0.24 & 0.08 \\
\texttt{gate} & 11.0 & 1.47 & 15.2 & 23.1 & 0.079 & 0.33 & 0.10 \\
\texttt{up} & 12.1 & 1.39 & 16.8 & 25.6 & 0.083 & 0.36 & 0.11 \\
\texttt{down} & 10.0 & 1.52 & 13.1 & 20.4 & 0.074 & 0.30 & 0.09 \\
\bottomrule
\end{tabular}}
\caption{Per-module statistics on LLaMA-3.1-8B (commonsense adapter): mean active rank, SURE multiple, share of first-pass actives inhibited and of new components recruited by explaining away, mean $|\cos|$ of co-active values, and gap to the joint code before and after explaining away.}
\label{tab:modules}
\end{table}

\paragraph{Budgeted variant.}
Table~\ref{tab:budget} lists the numbers behind Figure~\ref{fig:results}b. At every target rank, the budgeted squelch outperforms top-$k$ rules with the same average number of active components. The margin is largest at small budgets, where choosing \emph{which} tokens receive the few components matters most.

\begin{table}[h]
\centering
\footnotesize
\setlength{\tabcolsep}{3.2pt}
\resizebox{\columnwidth}{!}{%
\begin{tabular}{c c c c c}
\toprule
\textbf{Mean rank} $\bar{\bm k}$ & \textbf{\method{} (target $\bar k$)} & \textbf{\method{} top-$k$} & \textbf{SMoRA} & \textbf{FlyLoRA} \\
\midrule
2 & 85.31 & 85.02 & 84.62 & 84.95 \\
4 & 86.18 & 85.84 & 85.33 & 85.63 \\
6 & 86.71 & --- & --- & --- \\
8 & 86.98 & 86.29 & 85.82 & 86.11 \\
12 & 87.08 & --- & --- & --- \\
16 & 86.93 & 86.12 & 85.64 & 85.97 \\
\midrule
SURE ($\bar k{=}8.6$) & \textbf{87.04} & --- & --- & --- \\
\bottomrule
\end{tabular}}
\caption{Commonsense average on LLaMA-3.1-8B at matched mean active rank. ``\method{} top-$k$'' replaces the floor by a fixed top-$k$ but keeps explaining away.}
\label{tab:budget}
\end{table}

\paragraph{Training dynamics and per-token ranks.}
Figure~\ref{fig:dynamics} shows that the mean active rank falls from $32$ to about $9$ within the first tenth of training and then stays nearly constant, while the SURE multiples settle within the first tenth of training. Figure~\ref{fig:tokens} shows the distribution of per-token ranks and examples. Numbers and quantity words (``48'', ``half'', ``many'') recruit the most components, and function words and punctuation the fewest. On an out-of-domain MMLU question, most tokens keep zero to two components.

\begin{figure*}[t]
\centering
\includegraphics[width=0.94\textwidth]{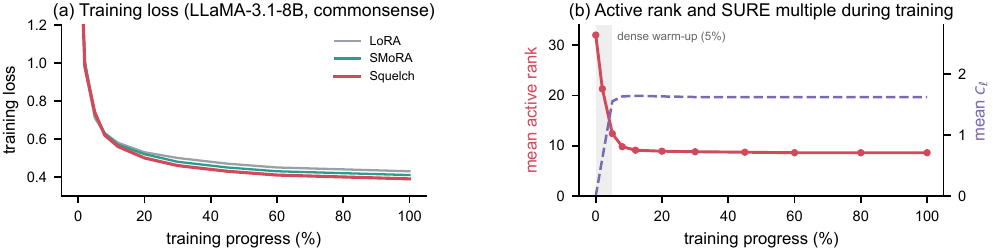}
\caption{Training dynamics on LLaMA-3.1-8B (commonsense): (a) training loss; (b) mean active rank (left) and mean SURE multiple (right).}
\label{fig:dynamics}
\end{figure*}

\begin{figure*}[t]
\centering
\includegraphics[width=0.96\textwidth]{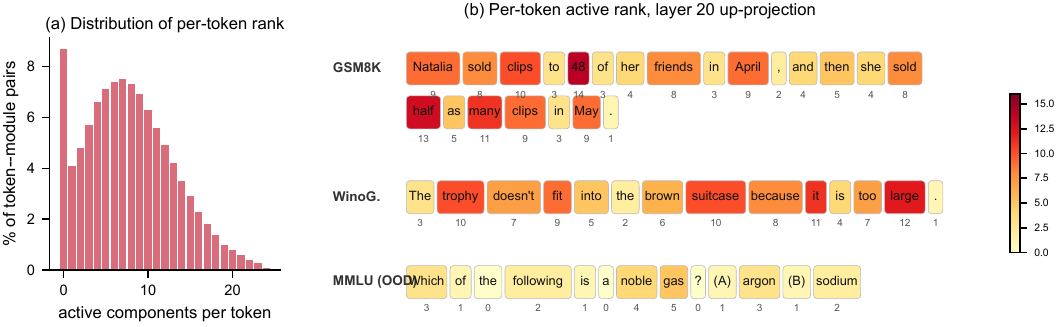}
\caption{Per-token ranks on LLaMA-3.1-8B. (a) Distribution over token--module pairs (math and commonsense evaluation data). (b) Active rank of each token in the layer-20 up-projection for a GSM8K problem, a WinoGrande sentence, and an out-of-domain MMLU question (commonsense adapter).}
\label{fig:tokens}
\end{figure*}

\section{Questions and Answers}
\label{app:faq}

\paragraph{Is the squelch just a ReLU gate on LoRA ranks?}
No. A ReLU gate, or SoRA's proximal gate~\citep{ding2023sora}, thresholds at a fixed or learned level that is shared by all inputs, and its sparsity is set by a penalty weight. The squelch thresholds each token relative to that token's own noise level, which makes the active set invariant to the token's norm (Proposition~\ref{prop:lora}(ii)), and it sets the multiple by an unbiased risk estimate, without a penalty weight or a target sparsity. Table~\ref{tab:ablation} shows that both choices matter: a per-layer scale loses $0.46$ and a threshold learned by the task loss collapses toward a dense code.

\paragraph{Why not learn a router, as mixture methods do?}
A router adds parameters, must be balanced, and learns from a loss that rewards activating more components. \method{} routes through the keys that the adapter already has and decides the number of active components with a label-free statistic~\citep{peng2024energy}. At matched average rank it outperforms routed rank-wise mixtures (Table~\ref{tab:budget}).

\paragraph{Are the responses really Gaussian?}
Only approximately. The median estimate needs only a symmetric bulk and at most half of the components carrying signal, and SURE needs only an approximately Gaussian bulk near the threshold. Heavier-tailed noise~\citep{kunstner2024heavy} would make the universal cap too permissive, but SURE's multiples stay well below the cap (Figure~\ref{fig:mechanism}c), and the sensitivity to the cap is small (Table~\ref{tab:sens}).

\paragraph{What happens when a token needs many components?}
The median estimate then errs upward (Appendix~\ref{app:more-theory}), which raises the threshold, so the squelch is conservative for such tokens. Explaining away partly compensates by recruiting complementary components (Figure~\ref{fig:mechanism}d). A larger dictionary helps more than a lower threshold (Figure~\ref{fig:results}a).

\paragraph{Does \method{} save compute?}
Not much at training time. The down-projection must be computed for all components to evaluate the responses, and LoRA's cost is small to begin with. Its purpose is accuracy and selectivity at LoRA's parameter count. At inference, inactive components skip the up-projection, but merged LoRA remains faster~\citep{sheng2024slora} (Table~\ref{tab:overhead}).

\paragraph{Is the comparison fair to the mixtures?}
All methods have the same target modules, data, optimizer, schedule, and evaluation, and all but FlyLoRA and SMoRA are matched to LoRA's parameter count. FlyLoRA has fewer trainable parameters by design, and SMoRA has more because of its router. Learning rates were tuned per method (Appendix~\ref{app:setup}).

\paragraph{Can \method{} be combined with other LoRA improvements?}
Yes. The squelch only changes how components are combined per token, so it composes with initialization schemes, magnitude--direction decompositions, and optimizer corrections~\citep{meng2024pissa,liu2024dora,wang2024loraga,hayou2024loraplus}. We have not tested these combinations.

\section{Discussion}
\label{app:discussion}

\paragraph{Why a noise model instead of a router.}
A router must learn from the task loss both where to send a token and how many components to use, and the task loss rewards using more. \method{} separates the two questions. Where to send a token is answered by the keys, as in other self-routing designs~\citep{zou2026flylora,lu2026moram}. How many components it needs is answered by comparing each response with what an irrelevant component would produce. The second question has a label-free, statistically grounded answer, and the rest of the method follows from it: a robust scale, an unbiased risk, and a joint code.

\paragraph{What the noise floor is.}
The floor is the response of components whose keys are unrelated to the token. Because LoRA's down-projection moves little from its random initialization~\citep{zhu2024asymmetry}, most keys remain close to random directions, and their responses are close to Gaussian with a scale proportional to $\|x\|$; this is why the bulk in Figure~\ref{fig:motivation}a matches an untrained adapter. As training proceeds, keys that align with a token's directions leave the bulk, and the squelch lets exactly those through. The MAD scale makes the rule invariant to the token's norm (Proposition~\ref{prop:lora}(ii)), which matters because norms vary widely across tokens and layers~\citep{sun2024massive}.

\paragraph{What explaining away does and does not do.}
Explaining away turns marginal decisions into a joint one: it removes double counting among overlapping values and recruits components that are needed only together. It does \emph{not} make the dictionary itself less redundant, and our synthetic study shows that near-duplicate components can stay aligned when it is used (Appendix~\ref{app:synthetic}). Methods that disentangle co-activated components through a regularizer target a different quantity, the coherence of the dictionary. In our experiments, such a penalty partially substitutes for explaining away but adds nothing on top of it (Table~\ref{tab:ablation}), and helps only when the dictionary is too small for the task (Table~\ref{tab:syn-capacity}).

\paragraph{When \method{} helps less.}
Tasks whose adaptation is close to a uniform shift, such as MMLU in our experiments, gain little from input-dependent rank, because most tokens need the same few components. Very small dictionaries ($r\le8$) leave little room for selection. Settings that require merging the adapter into the backbone for zero-overhead inference should use LoRA, or distill the squelched adapter into a merged one~\citep{bensaid2026slad}.

\paragraph{Extensions.}
The same statistical view applies to other ways of combining many small components per input: routing among full experts, where the responses are router logits; sparse autoencoders, where SURE could set per-feature thresholds without a sparsity target; and continual learning, where tokens that no old component answers could trigger new components. Applying the squelch in reinforcement-learning fine-tuning, where gradient-based allocation has been reported to fail~\citep{sawant2026gradient}, is a natural next test.

\paragraph{Broader impact.}
\method{} makes fine-tuning more parameter- and compute-efficient and leaves out-of-domain inputs closer to the base model. The first lowers the cost of adaptation for everyone, including malicious users; the second reduces, but does not remove, the risk that fine-tuning degrades safety behavior~\citep{qi2024safety}.

\end{document}